\newtheorem{lem}{Lemma} 
\newtheorem{theorem}{Theorem}
\newtheorem{assump}{Assumption}
\def\ln{{\rm ln}}
\def\mc{\mathcal}
\def\mb{\mathbf}
\def\mbb{\mathbb}
\def\ra{\rightarrow}
\def\la{\leftarrow}
\def\P{\mathbf{P}}
\def\bpi{\boldsymbol\pi}
\def\GS{\textbf{GT-SAGA}}
\def\SGD{\textbf{\texttt{SGD}}}
\def\SGP{\textbf{\texttt{SGP}}}
\def\DSGD{\textbf{\texttt{DSGD}}}
\def\DSGT{\textbf{\texttt{DSGT}}}
\def\SG{\textbf{\texttt{SAGA}}}
\def\PS{\textbf{\texttt{Push-SAGA}}}
\def\PDG{\textbf{\texttt{Push-DIGing}}}
\def\GS{\textbf{\texttt{GT-SAGA}}}
\def\GP{\textbf{\texttt{GP}}}
\def\SGP{\textbf{\texttt{SGP}}}
\def\ADD{\textbf{\texttt{ADDOPT}}}
\def\SA{\textbf{\texttt{SADDOPT}}}
\def\mbb{\mathbb}
\def\mb{\mathbf}
\def\mc{\mathcal}
\def\ol{\overline}
\def\ul{\underline}
\def\bds{\boldsymbol}
\newcommand{\mn}[1]{{\left\vert\kern-0.25ex\left\vert\kern-0.25ex\left\vert\kern0.3ex #1 
		\kern0.3ex\right\vert\kern-0.25ex\right\vert\kern-0.25ex\right\vert}}
\title{\huge \textbf{Push-SAGA: A decentralized stochastic algorithm with variance reduction over directed graphs}}
\author{
Muhammad I. Qureshi$^\dagger$, Ran Xin$^\ddagger$, Soummya Kar$^\ddagger$, and Usman A. Khan$^\dagger$\\
$^\dagger$Tufts University, Medford, MA, USA, $^\ddagger$Carnegie Mellon University, Pittsburgh, PA, USA
\thanks{The authors acknowledge the support of NSF  under awards  CCF-1513936, CMMI-1903972, and CBET-1935555.}
}
\begin{document}

\maketitle

\begin{abstract}
In this paper, we propose Push-SAGA, a decentralized stochastic first-order method for finite-sum minimization over a directed network of nodes. Push-SAGA combines node-level variance reduction to remove the uncertainty caused by stochastic gradients, network-level gradient tracking to address the distributed nature of the data, and push-sum consensus to tackle the challenge of directed communication links. We show that Push-SAGA achieves linear convergence to the exact solution for smooth and strongly convex problems and is thus the first linearly-convergent stochastic algorithm over arbitrary strongly connected directed graphs. We also characterize the regimes in which Push-SAGA achieves a linear speed-up compared to its centralized counterpart and achieves a network-independent convergence rate. We illustrate the behavior and convergence properties of Push-SAGA with the help of numerical experiments on strongly convex and non-convex problems.
\end{abstract}

\section{Introduction}
We consider decentralized finite-sum minimization over a network of~$n$ nodes, i.e.,
\[
\P:\qquad \min_{\mb z\in\mbb R^p} F(\mb{z}) := \frac{1}{n} \sum_{i=1}^{n} f_i(\mb{z}), \qquad f_i(\mb{z}) := \frac{1}{m_i} \sum_{j=1}^{m_i} f_{i,j}(\mb{z}),
\]
where each local cost function~${f_i:\mbb R^p\ra \mbb R}$, private to node~$i$, is further decomposable into~$m_i$ component cost functions. Problems of this nature commonly arise in many training and inference tasks over decentralized and distributed data. In many modern applications, problems of interest have become very large-scale and huge amounts of data is being stored or collected at a large number of geographically distributed nodes (machines, devices, robots). It is thus imperative to design methods that are efficient in both computation and communication.

\begin{figure}[!ht]
\centering
\includegraphics[height=1.2in]{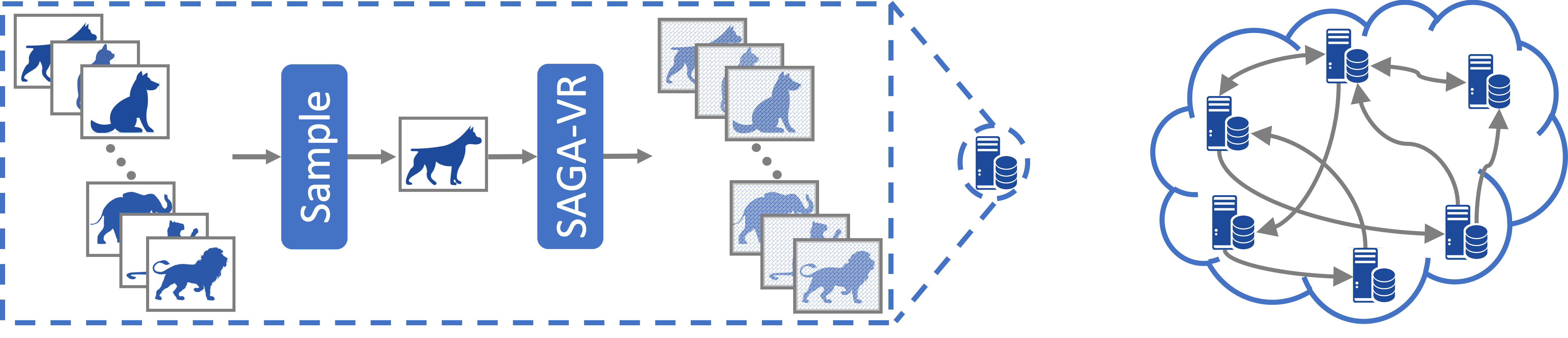}
\caption{(Left) Node-level: Each node computes the gradient at a random data sample and then estimates the local batch gradient with the help of variance reduction. (Right) Network-level: The nodes implement global gradient tracking with the help of inter-node fusion and push-sum.}
\label{network}
\end{figure}

This paper describes a \textit{stochastic, first-order} method~\PS~with a low per-iteration computation complexity, while the nodes communicate over \textit{directed graphs} that are particularly amenable to efficient, resource-constrained network design and often result from severing costly communication links. Existing decentralized stochastic gradient methods over general directed graphs suffer from the variance of the stochastic gradients and the disparity between the local~$f_i$ and global costs~${F={\sum_i} f_i}$. To overcome these challenges,~\PS~utilizes \textit{variance reduction}, locally at each node, to remove the uncertainty caused by the stochastic gradients, and \textit{gradient tracking}, at the network level, to track the global gradient; see Fig.~\ref{network}. Since the underlying communication graph is directed,~\PS~further uses the push-sum protocol to enable agreement among the nodes with network weight matrices that are not necessarily doubly stochastic. 

\subsection{Related Work} 
Decentralized stochastic gradient descent~(\DSGD) over undirected graphs can be found in~\cite{DSGD_Nedich,Diffusion_Chen,DGD_Kar,DSGD_NIPS}. Certain convergence aspects of~\DSGD~are further improved in~\DSGT~\cite{DSGT_Pu}~with the help of gradient tracking~\cite{GT_CDC,DAC,NEXT,harnessing,DIGing}. Of relevance are also~\texttt{Exact Diffusion}~\cite{SED} and~\texttt{D}$^2$~\cite{D2} that are stochastic methods based on an~\texttt{EXTRA}-type bias-correction principle.  For smooth and strongly convex problems,~\DSGD~and~\DSGT, similar to their centralized counterpart~\SGD~\cite{OPTML}, converge linearly to an inexact solution (or sublinearly to the exact solution) due to the variance of the stochastic gradients. Linear convergence in decentralized stochastic algorithms has been shown with the help of various local variance reduction schemes~\cite{SVRG,SAGA,AVRG,APCG}; see related work in~\cite{DSA,DAVRG,DSBA,ADFS}. However, all of these decentralized algorithms require symmetric, doubly stochastic network weights and thus are not applicable to directed graphs. A recent work~\GS~\cite{GTVR} combines variance reduction and gradient tracking to establish linear convergence over weight-balanced directed graphs~\cite{weightbalanced_digraph} with the help of doubly stochastic weight matrices.

However, not much progress has been made on stochastic methods over arbitrary directed graphs, where doubly stochastic weights cannot be constructed. Related work that does not use doubly stochastic weights includes stochastic gradient push~(\SGP)~\cite{SGP_nedich,SGP_ICML,SGP_AsyncOlshevsky}, that extends~\DSGD~to directed graphs with the help of push-sum consensus~\cite{ac_directed0}, and~\SA~\cite{saddopt} that adds gradient tracking to~\SGP. For smooth and strongly convex problems, both~$\SGP$ and~$\SA$, similar to their undirected counterparts~\DSGD~and~\DSGT, converge linearly to an \textit{inexact} solution with a constant stepsize and sublinearly to the exact solution with decaying stepsizes. Of relevance are also~\GP~\cite{opdirect_Tsianous} and~\PDG/\ADD~\cite{add-opt,DIGing}, which are non-stochastic counterparts of~$\SGP$ and~$\SA$ as they use local full (batch) gradients at each node. See also relevant work in~\cite{AsyncGP_mike,AsyncGP_scutari} on asynchronous implementations of the related (non-stochastic) methods. 

\subsection{Main Contributions.} 
The convergence of~\PS~is formally described in the following.

\begin{theorem} \label{th_main}
Consider Problem~$\P$ and let~${M:=\max_i m_i}$,~${m:=\min_i m_i}$, and each~$f_{i,j}$ be $L$-smooth and each~$f_i$ to be~$\mu$-strongly convex. For the stepsize~$\alpha \in (0, \ol{\alpha})$, for some~$\ol{\alpha}>0$,~\PS~linearly converges, at each node, to the global minimum~$\mb{z}^*$ of~$F$. In particular, for~$\alpha=\ol{\alpha}$,~$\PS$ achieves an~$\epsilon$-optimal solution in
\begin{align*}
\mc{O} \left(\max \left \{ M, \frac{M}{m}\frac{\kappa^2 \psi}{(1-\lambda)^2} \right \} \log \frac{1}{\epsilon} \right),
\end{align*}
component gradient computations (in parallel) at each node,
where~${\kappa:=L/\mu}$ is the condition number of~$F$,~${(1-\lambda)}$ is the spectral gap of the network weight matrix, and~$\psi \geq 1$~is a directivity constant.
\end{theorem}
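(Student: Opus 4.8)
\textit{Proof proposal.} The plan is to recast \PS~in aggregated vector--matrix form over the whole network and then reduce the analysis to a single linear time-invariant (LTI) inequality system governing a few coupled error quantities, in the spirit of the gradient-tracking-over-digraphs analyses of \PDG/\ADD~combined with the \SG-style variance-reduction bookkeeping. First I would stack the node states, the auxiliary gradient-tracking variables, the push-sum weights, and the \SG~gradient table into network-wide vectors, and then pass to the \emph{normalized} iterates obtained by dividing each node's state by its push-sum weight. This change of variables eliminates the non-doubly-stochastic scaling of the column-stochastic mixing matrix at the price of a time-varying transition operator that converges geometrically to a rank-one limit; the directivity constant~$\psi$ enters precisely here, through the norm-equivalence constants between the mixing matrix and its Perron vectors and through uniform upper and lower bounds on the push-sum weights.

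Next I would single out four nonnegative scalar quantities to propagate, all in expectation conditioned on the history up to iteration~$k$: (i) the optimality gap~$\|\ol{\mb z}_k-\mb z^*\|$ of the network-averaged iterate, (ii) the state consensus residual~$\|\mb z_k-\mb 1\ol{\mb z}_k\|$, (iii) the gradient-tracking residual, and (iv) the aggregate \SG~variance term, i.e., the expected sum over~$i,j$ of~$\|\nabla f_{i,j}(\text{stored point})-\nabla f_{i,j}(\mb z^*)\|^2$. For each I would derive a one-step bound of the form ``own rate~$\times$ own value~$+$ small coupling to the others'': $\mu$-strong convexity and~$L$-smoothness give the descent~$1-\mc O(\alpha\mu)$ on (i); the spectral gap~$(1-\lambda)$ yields the contraction~$\lambda+\mc O(\alpha)$ on (ii) and (iii); and the \SG~refresh rule — each stored component gradient at node~$i$ is replaced with probability~$1/m_i\ge 1/M$ — produces the~$(1-1/M)$-type decay of (iv) together with an injection proportional to the current distance to optimality. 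The cross terms are~$\mc O(\alpha)$ or~$\mc O(\alpha^2)$ and carry the appropriate~$\psi$, $\kappa$, and~$M/m$ factors from the normalized-coordinate estimates.

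Assembling these four inequalities gives~$\mb t_{k+1}\le \mb G(\alpha)\,\mb t_k$ (entrywise, in expectation) for a nonnegative~$4\times 4$ matrix~$\mb G(\alpha)$ whose diagonal collects the three ``slow'' rates and whose off-diagonal part vanishes with~$\alpha$. The decisive step is to certify~$\rho(\mb G(\alpha))<1$ for all~$\alpha\in(0,\ol\alpha)$ and to read off~$\ol\alpha$ and the rate. I would use the standard criterion that a nonnegative~$\mb G(\alpha)$ has~$\rho(\mb G(\alpha))<1$ iff~$\mb G(\alpha)\bds\delta<\bds\delta$ entrywise for some strictly positive~$\bds\delta$; choosing the components of~$\bds\delta$ to balance the~$\mu$-scale, the~$(1-\lambda)$-scale, and the~$1/M$-scale reduces the matrix inequality to a handful of scalar inequalities in~$\alpha$, whose binding constraint yields~$\ol\alpha=\Theta\!\big(\tfrac{m(1-\lambda)^2}{\mu M\kappa^2\psi}\big)$ up to absolute constants. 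Setting~$\alpha=\ol\alpha$ then gives~$\rho(\mb G(\ol\alpha))\le 1-\Theta\!\big(1/\max\{M,\ \tfrac{M}{m}\kappa^2\psi/(1-\lambda)^2\}\big)$ — the~$M$ term being forced by (iv) regardless of how small~$\alpha$ is — and hence the claimed~$\mc O(\max\{M,\ \tfrac{M}{m}\kappa^2\psi/(1-\lambda)^2\}\log\tfrac1\epsilon)$ iteration count, which equals the per-node component-gradient count since each iteration queries~$\mc O(1)$ component gradients at every node.

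I expect the main obstacle to be controlling the genuine (non-triangular) coupling between the \SG~variance term and the push-sum/tracking errors: the stored-gradient error feeds the tracking variable, which feeds the state update, and the state is exactly the point against which the \emph{next} stored gradient is measured — so one cannot simply contract the quantities one at a time but must verify that every cross term carries a high enough power of~$\alpha$ (with the correct~$\psi$ and~$M/m$ scaling) for a common positive~$\bds\delta$ to exist. A secondary technical nuisance is the transient of the time-varying normalized transition operator: its deviation from the limiting rank-one operator must be bounded by a geometric sequence and folded into the constants of the LTI system without inflating the asymptotic rate.
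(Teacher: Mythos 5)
Your proposal follows essentially the same route as the paper: the same four coupled error quantities (network agreement, optimality gap of the averaged iterate, gradient-tracking residual, and a SAGA auxiliary/table term), the same nonnegative $4\times 4$ recursion, the same positive-vector certificate $G_\alpha\bds\delta\le\gamma\bds\delta$ for the spectral radius, and the same resulting $\ol{\alpha}$ and complexity. The one step you leave under-specified --- which you correctly flag as the push-sum transient --- is that the paper's recursion is inhomogeneous, $\mb u^k\le G_\alpha\mb u^{k-1}+H_{k-1}\mb s^{k-1}$ with $H_k\propto T\lambda^k$ and $\mb s^k$ containing $\mbb E[\|\mb x^k\|_2^2]$ (which is not a priori bounded), so it cannot simply be folded into the constants: the paper bounds $\|\mb s^k\|_2$ by $\|\mb u^k\|_2$ plus a constant and invokes a Polyak-type sequence lemma to show the perturbed iteration still decays at $\mc O((\gamma+\xi)^k)$.
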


The main contributions of this paper are summarized next:

\textbf{(1) Linear convergence.} \PS~is the first linearly-convergent \textit{stochastic} method to minimize a finite sum of smooth and strongly convex cost functions over arbitrary \textit{directed} graphs. We emphasize that the analysis of~\PS~does not extend directly from the methods over undirected graphs. This is because: (i) the underlying weight matrices do not contract in the standard Euclidean norm; and, (ii) the algorithm has a nonlinear iterative component due to the push-sum update.

\textbf{(2) Directivity constant.} We explicitly quantify the directed nature of the underlying graphs with the help of a directivity constant~$\psi\geq1$, which equals to~$1$ for undirected and weight-balanced directed graphs, and thus, for finite-sum minimization, this work includes~\DSGD,~\SGP,~\DSGT,~\SA, and~\GS~as its special cases.

\textbf{(3) Linear speed-up and Network-independent convergence.} In a big-data regime where~$${M\approx m \gg\kappa^2\psi(1-\lambda)^{-2}},$$ \PS~with a complexity of~$\mc O(M\log \frac{1}{\epsilon})$ is~$n$ times faster than the centralized~\SG, and this convergence rate is further independent of the network parameters. 

\textbf{(4) Improved Performance.} In the aforementioned big-data regime,~{\PS} improves upon the related linearly-convergent methods~\cite{DSA,DAVRG,AGT} over undirected graphs in terms of the joint dependence on~$\kappa$ and~$m$; with the exception of \textbf{\texttt{DSBA}}~\cite{DSBA} and \textbf{\texttt{ADFS}}~\cite{ADFS}, both of which achieve a better iteration complexity at the expense of computing the proximal mapping at each iteration. 

The rest of the paper is organized as follows. In Section~\ref{algo_dev}, we provide algorithm development and formally describe~\PS. Section~\ref{conv_ana} provides the convergence analysis, while Section~\ref{num_expt} contains numerical experiments on strongly convex and non-convex problems.
\section{Motivation and Algorithm Development}\label{algo_dev}
In order to motivate~\PS, we first describe~\DSGD, a well-known decentralized extension of~\SGD, and its performance with a constant stepsize~$\alpha$. Let~$\mb z^*$ denote the global minimum of Problem~$\P$ and let~$\mb x_i^k\in\mbb R^p$ denote the~\DSGD~estimate of~$\mb z^*$ at node~$i$ and iteration~$k$. Each node~$i$ updates~$\mb x^k_i$ as
\begin{align}\label{dsgd_eq}
\mb x_i^{k+1} = \sum_{r=1}^n w_{ir} \mb x_r^k - \alpha \cdot \nabla f_{i,s_i^k}(\mb x_i^k),\qquad k\geq0,
\end{align}
where~${\ul W=\{w_{ir}\}\in\mbb R^{n\times n}}$ is a network weight matrix that respects the communication graph, i.e.,~${w_{ij}\neq 0}$, if and only if node~$j$ can send information to node~$i$, and~$s_i^k$ is chosen uniformly at random from the set~$\{1,\ldots,m_i\}$ at each iteration~$k$. Under the corresponding smoothness and strong convexity conditions, and assuming that each local stochastic gradient has a bounded variance, i.e.,
\[
\mathbb{E}_{s^k_i}[\|\nabla f_{i,s^k_i}(\mb x^k_i)-\nabla f_i(\mb x_i^k)\|_2^2 ~|\:\mb x_i^k] \leq\sigma^2,\qquad\forall i,k,
\]
it can be shown that, for a certain constant stepsize~$\alpha$, the error~$\mathbb{E}[\|\mb x^k_i-\mb z^*\|_2^2]$, at each node~$i$, decays at a linear rate of~$\left(1-\mc{O}(\mu\alpha)\right)^k$ to a neighborhood of~$\mb z^*$ such that~\cite{SED} 
\begin{equation}\label{DSGD_convergence}
\limsup_{k\rightarrow\infty}\frac{1}{n}\sum_{i=1}^{n}\mathbb{E}[\|\mb x^k_i-\mb z^*\|_2^2]
=  \mc{O}\Big(~\frac{\alpha}{n\mu}\:\sigma^2
+ \frac{\alpha^2\kappa^2}{1-\lambda}\:\sigma^2
+ \frac{\alpha^2\kappa^2} {(1-\lambda)^2}\:\eta~\Big),
\end{equation}
where~$\eta := \frac{1}{n}\sum_{i=1}^{n}\left\|\nabla f_i\left(\mb z^*\right)\right\|_2^2$ and~$(1-\lambda)$ is the spectral gap of the weight matrix~$\ul W$. Equation~\eqref{DSGD_convergence} is based on a constant stepsize~$\alpha$ that leads to a linear but inexact convergence and our goal is to recover linear convergence to the exact solution. (Note that a constant stepsize is essential for linear convergence and a decaying stepsize even though removes the steady state error but the resulting convergence rate is sublinear.)

We now consider the error terms in~\eqref{DSGD_convergence}. The first two terms both depend on the variance~$\sigma^2$ introduced due to the stochastic gradient and vanish as~$\sigma^2\ra0$; a variance reduction scheme that replaces the local stochastic gradients~$\nabla f_{i,s_i^k}$, in~\DSGD~\eqref{dsgd_eq}, with an estimate of the  local batch gradient~${\textstyle\sum_j \!\nabla f_{i,j}}$ thus potentially removes this variance. The last term in~\eqref{DSGD_convergence} involves~$\eta$, which quantifies the disparity between the local costs~$f_i$'s and the global cost~$F$ (recall that~$\nabla F(\mb z^*)=0$). A mechanism that uses the local gradient estimators (from the variance reduction step) to learn the global gradient thus removes~$\eta$; this is realized with the help of dynamic average consensus~\cite{DAC}. In summary, adding local variance reduction and global gradient tracking to~\DSGD~potentially leads to linear convergence for smooth and strongly convex problems. However, the weights~$w_{ir}$ in~\DSGD~are such that~$\ul W=\{w_{ir}\}$ is doubly stochastic, which in general requires the underlying communication graph to be undirected. 

In directed graphs, the weight matrix may either be row stochastic or column stochastic, in general, but not both at the same time. Consequently, the proposed method~\PS~uses primitive, column stochastic weights~${\ul B\in\mbb R^{n\times n}}$, for which it can be verified that the nodes do not reach agreement, i.e.,~$\ul B\mb 1_n\neq\mb 1_n$, where~$\mb 1_n$ is a column vector of~$n$ ones. In fact, assuming~$\bpi$ to be right eigenvector of~$\ul B$ corresponding to the eigenvalue~$1$, the iterations~${\mb x^{k+1}=\ul B\mb x^k \ra \ul B^\infty \mb x^0 = \bpi\mb1_n^\top \mb x^0}$, which only leads to an agreement among the components of~$\mb x^k$, when its~$i$-th component~$\mb{x}^k_i$ is scaled by the~$i$-th component~$\bpi_i$ of~$\bpi$. This asymmetry, caused by the non-$\mb 1_n$ (right) eigenvector of~$\ul B$, is removed with the help of the push-sum correction. In particular, push-sum estimates the non-$\mb 1_n$ eigenvector~$\bpi$ with separate iterations:~${\mb y^{k+1} = \ul B \mb y^k},~{\mb y^0=\mb 1_n}$; subsequently, each component of~$\mb x^k$ is scaled by the corresponding component of~$\mb y^k$ to obtain an agreement in the~$\mb x^k_i / \mb y^k_i$ iterate. 

\subsection{Algorithm Description:~\PS}
The proposed algorithms~\PS~has three main components, see also Fig.~\ref{network}:
\begin{enumerate}[(i)]
\item Variance reduction, which utilizes the~\SG-based gradient estimator~\cite{SAGA} to estimate the local batch gradient~$\nabla f_i$ at each node~$i$ from locally sampled gradients;
\item Gradient tracking, which is based on dynamic average consensus~\cite{DAC} to estimate the global gradient~$\nabla F$ from the local batch gradient estimates; and,
\item Push-sum consensus~\cite{ac_directed0}, which cancels the imbalance caused by the asymmetric nature of the underlying (directional) communication.
\end{enumerate}

The algorithm is formally described next.
\begin{algorithm}[H] 
	\caption{~\textbf{\texttt{Push-SAGA}} at each node~$i$} \label{algo}
	\begin{spacing}{1.4}
    \begin{algorithmic}[1] 
		\Require ${\mb{x}_i^0\in\mbb R^p},~{\mb{z}_i^0 = \mb{x}_i^0},~{\mb{w}_i^0=\mb{g}_i^0=\nabla f_{i}(\mb{z}_{i}^{0})},~{\mb{v}_{i,j}^1 = \mb{z}_i^0,~\forall j \in \{1,\cdots,m_i\}},~{y_i^0=1},~{\{b_{ir}\}_{r=1}^n}$,
		
		~~Gradient table:~$\{\nabla f_{i,j}(\mb{v}_{i,j}^0)\}_{j=1}^{m_i}$~and~${\alpha>0}$
		
		\For{$k = 0,1,2,\cdots$}
		\State$\mb{x}_{i}^{k+1} \la \sum_{r=1}^n b_{ir} \mb{x}_{r}^{k} - \alpha\cdot\mb{w}_{i}^{k}$ 
		\State $y_{i}^{k+1} \:\la \sum_{r =1}^n b_{ir}y_{r}^{k}$
		\State $\mb{z}_i^{k+1} \:\la {\mb{x}_i^{k+1}}/{y_i^{k+1}}$ 
		\State \textbf{Select}~$s_{i}^{k+1}$ uniformly at random from~$\{1,\cdots,m_i\}$
		\State $\mb{g}_{i}^{k+1} \la \nabla f_{i,s_i^{k+1}}(\mb{z}_{i}^{k+1}) - \nabla f_{i,s_i^{k+1}}(\mb{v}_{i,s_i^{k+1}}^{k+1}) + \frac{1}{m_i}\sum_{j=1}^{m_i}\nabla f_{i,j}(\mb{v}_{i,j}^{k+1})$ 
		\State \textbf{Replace}~$\nabla f_{i,s_i^{k+1}}(\mb{v}_{i,s_i^{k+1}}^{k+1})$ by~$\nabla f_{i,s_i^{k+1}}(\mb{z}_{i}^{k+1})$ in the gradient table
		\State $\mb{w}_{i}^{k+1} \la \sum_{r =1 }^{n}b_{ir} \mb{w}_{r}^{k} + \mb{g}_i^{k+1} - \mb{g}_i^{k}$
		\If{$j = s_{i}^{k+1}$,}
		$\mb{v}_{i,j}^{k+2} \la \mb{z}_i^{k+1}$,~\textbf{else}~$\mb{v}_{i,j}^{k+2} \la \mb{v}_{i,j}^{k+1}$
		\EndIf
		\EndFor
	\end{algorithmic}
	\end{spacing}
\end{algorithm}
\PS~requires a gradient table at each node~$i$, where~$m_i$ component gradients~$\{\nabla f_{i,j}\}_{j=1}^{m_i}$ are stored. At each iteration~$k$, each node~$i$ first computes an~\SGP-type iterate~$\mb z_k^i$ with the help of the push-sum correction. It is important to note that the descent direction in the~$\mb x_k^i$-update (and thus in the~$\mb z_k^i$-update) is~$\mb w_k^i$, which is the global gradient tracker, in contrast to the locally sampled gradient as in~\DSGD~\eqref{dsgd_eq}. Subsequently, node~$i$ generates a random index~$s_i^k$ and computes the \SG-based gradient estimator~$\mb g_i^k$, with the help of the current iterate~$\mb z_i^k$ and the elements from the gradient table. The gradient table is updated next only at the~$s_i^k$-th element, while the other elements remain unchanged. Finally, these gradient estimators~$\mb g^k_i$'s are fused over a network-level update, with the help of dynamic average consensus, to obtain~$\mb w_i^k$'s that track the global gradient.

We remark that the computation and communication advantages of~\PS~are realized at an additional storage requirement. In particular, each node requires~$\mc O(pm_i)$ storage that can be reduced to~$\mc O(m_i)$ for certain problems~\cite{SAGA}. The convergence analysis of~\PS~is provided next. 

\section{Convergence of Push-SAGA}
\label{conv_ana}
This section provides the formal analysis of~\PS. We start with the following assumptions.
\begin{assump}[Column stochastic weights]\label{col_stoc_w}
The weight matrix~$\ul{B}=\{b_{ir}\}\in\mathbb{R}^{n \times n}$ associated with the directed graph is primitive and column stochastic, i.e.,~$\mb{1}_n^\top \ul B = \mb{1}_n^\top$ and~$\ul B\bpi=\bpi$,
where~$\mb 1_n$ is a vector of~$n$ ones and~${\bpi}$ is the right (positive) eigenvector of~$\ul B$ for the eigenvalue~$1$ such that~${\mb 1_n^\top\bpi=1}$. 
\end{assump}
Column-stochastic weights can be locally designed at each node  by choosing~${b_{ri}=1/d_i^{\mbox {\tiny out}}}$, where~$d_i^{\mbox {\tiny out}}$ is the out-degree at node~$i$. From Perron Frobenius theorem, we have~${\ul B^\infty:=\lim_{k\ra\infty}\ul B^k=\bpi \mb 1_n^\top}$. Let~${\|\cdot\|_2}$ and~$\mn{\cdot}_2$ denote the standard vector $2$-norm and the matrix norm induced by it, respectively, and define a weighted inner product as~${\langle\mb{x}, \mb{z}\rangle_{\bds{\pi}} \!:=\! \mb{x}^\top \mbox{diag}(\bds{\pi})^{-1} \mb{z}}$, for~${\mb{x},\mb{z}\in\mbb{R}^p}$, which leads to a weighted Euclidean norm:~${\|\mb{x}\|_{\bds{\pi}} := \|\mbox{diag}(\sqrt{\bds{\pi}})^{-1} \mb{x}\|_2}$. We denote~$\mn{\cdot}_{\bds{\pi}}$ as the matrix norm induced by~${\|\cdot\|_{\bds{\pi}}}$ such that~${\forall X \in \mbb{R}^{n \times n}}$,~${\mn{X}_{\bpi} := \mn{\mbox{diag}(\sqrt{\bds{\pi}})^{-1}X\mbox{diag}(\sqrt{\bds{\pi}})}_2}$. Under this induced norm,~$\ul B$ contracts in the eigenspace orthogonal to the eigenspace corresponding to the eigenvalue~$1$, see~\cite{DSGT_Xin} for formal arguments, i.e., 
\begin{align}
\lambda:=\mn{\ul B-\ul B^\infty}_{\bpi}<1\Rightarrow (1-\lambda)<1,
\end{align}
where~${(1-\lambda)}$ is the spectral gap of the weight matrix~$\ul B$. Moreover, it can be verified from the norm definitions that~${\|\cdot\|_{\bds{\pi}} \leq \ul{{\pi}}^{-0.5}\|\cdot\|_2}$ and~${\|\cdot\|_2 \leq \ol{{\pi}}^{0.5}\|\cdot\|_{\bds{\pi}}}$, where~${\ol{\pi}}$ and~${\ul{\pi}}$ are the maximum and minimum elements of~$\bpi$, respectively, while~${\mn{\ul B}_{\bds{\pi}} \!=\! \mn{\ul B^\infty}_{\bds{\pi}} = \mn{I_n - \ul B^\infty}_{\bds{\pi}} = 1}$.

\begin{assump}[Smooth and strongly convex cost functions] \label{smooth_convex} Each local cost~$f_{i}$ is~$\mu$-strongly convex and each component cost~$f_{i,j}$ is~$L$-smooth.
\end{assump}
We define the class of $\mu$-strongly convex and $L$-smooth functions as~$\mc S_{\mu,L}$. It can be verified that~${f_i\in\mc S_{\mu,L}}$, ${\forall i}$, and~${F\in\mc S_{\mu,L}}$; consequently,~$F$ has a global minimum that is denoted by~$\mb z^*\in\mbb R^p$. For any function in~$S_{\mu,L}$, we define its condition number as~$\kappa := \frac{L}{\mu}$. Note that~$\kappa \geq 1$.

\subsection{Auxiliary Results} \label{lti}
Before we proceed, we write~\PS~ in a vector-matrix format. To this aim, we define global vectors~$\mb x^k,\mb z^k\mb g^k, \mb w^k$, all in~$\mbb R^{pn}$, and~$\mb y^k\in\mbb R^n$, i.e.,
\begin{align*}
\mb{x}^k &:= \left[ {\begin{array}{c}
    \mb{x}^{k}_1\\
    \vdots\\
    \mb{x}^{k}_n
  \end{array} } \right], \quad \mb{z}^k := \left[ {\begin{array}{c}
    \mb{z}^{k}_1\\
    \vdots\\
    \mb{z}^{k}_n
  \end{array} } \right], \quad \mb{g}^k := \left[ {\begin{array}{c}
    \mb{g}^{k}_1\\
    \vdots\\
    \mb{g}^{k}_n
  \end{array} } \right], \quad \mb{w}^k := \left[ {\begin{array}{c}
    \mb{w}^{k}_1\\
    \vdots\\
    \mb{w}^{k}_n
  \end{array} } \right], \quad \mb{y}^k := \left[ {\begin{array}{c}
    y^{k}_1\\
    \vdots\\
    y^{k}_n,
  \end{array} } \right],
\end{align*}
and global matrices as~$B:=\ul B\otimes I_p$ and~$Y_k:=\mbox{diag}(\mb y^k)\otimes I_p$, both in~$\mbb R^{pn\times pn}$.~$\PS$ in Algorithm~\ref{algo} can now be equivalently written as
\begin{subequations}\label{sys_vec}
\begin{align}
\mb{x}^{k+1} &= B \mb{x}^{k} - \alpha  \cdot \mb{w}^{k},\\\label{y_eq}
\mb{y}^{k+1} &= \ul{B} \mb{y}^{k},\\
\mb{z}^{k+1} &= Y^{-1}_k \mb{x}^{k+1},\\
\mb{w}^{k+1} &= B \mb{w}^{k} + \mb{g}^{k+1} - \mb{g}^{k}.
\end{align}
\end{subequations}
To aid the analysis, we define four error quantities:
\begin{inparaenum}[(i)]
    \item Network agreement error:~$\mbb{E} \| \mb{x}^k - B^\infty \mb{x}^k \|^2 $;
    \item Optimality gap:~${\mbb{E} \| \ol{\mb{x}}^k - \mb{z}^* \|^2}$, where~${\ol{\mb x}^k:=\tfrac{1}{n}(\mb 1_n^\top \otimes I_p)\mb x^k}$;
    \item Mean auxiliary gap:~${\mbb{E} [ \mb{t}^k]}$, where ${\mb{t}^{k} := \sum_{i=1}^{n} (\frac{1}{m_i} \sum_{j=1}^{m_i} \| \mb{v}^{k}_{i,j} - \mb{z}^* \|_2^2 )}$;
    \item Gradient tracking error:~${\mbb{E} \| \mb{w}^k - B^\infty \mb{w}^k \|^2}$.
\end{inparaenum} 
In the following, Lemma~\ref{main_lem} establishes a relationship between these errors with the help of Lemma~\ref{y_lem}.
\begin{lem}\cite{GP_neidch}\label{y_lem}
Consider Assumption~$\ref{col_stoc_w}$~and let~${Y^\infty:=\lim_{k\ra\infty} Y_k}$, then~${\mn{Y_{k} - Y^\infty}_2 \leq T \lambda^k},\forall k$, where~${T := \sqrt{h} \|\mb{1}_n - n \bds{\pi}\|_2}$ and~${h:=\ol{\pi}/\ul{\pi}}>1$.
\end{lem}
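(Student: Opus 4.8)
The plan is to reduce the matrix estimate to a scalar contraction bound on the push-sum vector $\mb y^k$ and then pass through the $\bpi$-weighted geometry in which $\ul B$ is contractive. First I would pin down $Y^\infty$: from~\eqref{y_eq}, $\mb y^k=\ul B^k\mb 1_n$, and since $\mb 1_n^\top\ul B=\mb 1_n^\top$ we get $\mb 1_n^\top\mb y^k=n$ for every $k$; hence $\ul B^\infty\mb y^k=\bpi\mb 1_n^\top\mb y^k=n\bpi$, and letting $k\to\infty$ (using $\ul B^k\to\ul B^\infty$) yields $\mb y^\infty=n\bpi$, i.e.\ $Y^\infty=\mbox{diag}(n\bpi)\otimes I_p$. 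Thus $Y_k-Y^\infty=\mbox{diag}(\mb y^k-n\bpi)\otimes I_p$ is block diagonal, and since the induced $2$-norm of such a matrix equals $\|\mb y^k-n\bpi\|_\infty$, we have $\mn{Y_k-Y^\infty}_2\le\|\mb y^k-n\bpi\|_2$, which reduces the lemma to bounding $\|\mb y^k-n\bpi\|_2$.

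Next I would derive a one-step recursion for the residual. Subtracting $n\bpi=\ul B^\infty\mb y^k$ from $\mb y^{k+1}=\ul B\mb y^k$ and using $(\ul B-\ul B^\infty)\bpi=\ul B\bpi-\ul B^\infty\bpi=\bpi-\bpi=\mb 0$, one obtains
\[
\mb y^{k+1}-n\bpi=(\ul B-\ul B^\infty)\mb y^k=(\ul B-\ul B^\infty)(\mb y^k-n\bpi),
\]
so by induction $\mb y^k-n\bpi=(\ul B-\ul B^\infty)^k(\mb 1_n-n\bpi)$. The only place where the directed setting really intervenes is that $\ul B$ is not a contraction in the Euclidean norm, so I would measure this recursion in $\|\cdot\|_{\bpi}$, for which $\mn{\ul B-\ul B^\infty}_{\bpi}=\lambda<1$ by Assumption~\ref{col_stoc_w} and the remarks following it; submultiplicativity of the induced norm then gives $\|\mb y^k-n\bpi\|_{\bpi}\le\lambda^k\|\mb 1_n-n\bpi\|_{\bpi}$.

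Finally I would convert back using the stated comparisons $\|\cdot\|_2\le\ol\pi^{\,0.5}\|\cdot\|_{\bpi}$ and $\|\cdot\|_{\bpi}\le\ul\pi^{-0.5}\|\cdot\|_2$:
\[
\|\mb y^k-n\bpi\|_2\le\ol\pi^{\,0.5}\lambda^k\|\mb 1_n-n\bpi\|_{\bpi}\le\sqrt{\ol\pi/\ul\pi}\;\|\mb 1_n-n\bpi\|_2\,\lambda^k=T\lambda^k,
\]
which together with $\mn{Y_k-Y^\infty}_2\le\|\mb y^k-n\bpi\|_2$ finishes the proof. I do not expect a genuine obstacle here: the argument is essentially bookkeeping, and the only thing to watch is that the detour through the $\bpi$-weighted norm (forced because the column-stochastic $\ul B$ contracts only there) is carried out so that the two conversion factors collapse exactly to $\sqrt{h}=\sqrt{\ol\pi/\ul\pi}$.
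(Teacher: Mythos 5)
Your proof is correct, and in fact the paper does not prove this lemma at all --- it is imported verbatim from the cited push-sum reference --- so there is nothing internal to compare against. Your route is the standard one and lands exactly on the stated constant: reduce $\mn{Y_k-Y^\infty}_2$ to $\|\mb y^k-n\bpi\|_2$ via the diagonal structure, identify $\mb y^\infty=n\bpi$ from column stochasticity, run the residual recursion $\mb y^{k+1}-n\bpi=(\ul B-\ul B^\infty)(\mb y^k-n\bpi)$ through the $\bpi$-weighted norm where $\mn{\ul B-\ul B^\infty}_{\bpi}=\lambda$, and convert back so the two norm-equivalence factors combine to $\sqrt{\ol\pi/\ul\pi}=\sqrt h$. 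All the individual steps check out, including the $k=0$ base case (where $h>1$ absorbs the slack) and the annihilation identity $(\ul B-\ul B^\infty)\bpi=\mb 0$ that makes the recursion close.
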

Lemma~\ref{y_lem} quantifies the convergence rate of~\eqref{y_eq} and is only meaningful for column stochastic weights. When the weights are doubly stochastic, as in undirected graphs, we have~${T=0}$ since~${\bpi=\tfrac{1}{n}\mb 1_n}$.

\begin{lem}\label{main_lem}
Consider~\PS~described in~\eqref{sys_vec}~under Assumptions~\ref{col_stoc_w},~\ref{smooth_convex}, and let ${y := \sup_k \mn{Y_k}_2}$, ${y_- := \sup_k \mn{Y_k^{-1}}_2}$, and for all~$k\geq0$, define~${\mb{u}^k, \mb{s}^k \in \mbb{R}^4}$ and~$G_\alpha, H_k \in \mbb{R}^{{4 \times 4}}$ as
\begin{align} 
\label{lti}
\begin{array}{ll}
\mb{u}^{k} :=
  \left[ {\begin{array}{c}
   \mbb{E}[\|\mb{x}^{k} - B^\infty \mb{x}^{k} \|^2 _{\bpi}] \\
   \mbb{E}[n \|\ol{\mb{x}}^{k} - \mb{z}^* \|_2 ^2] \\
   \mbb{E}[\mb{t}^{k}] \\
    \mbb{E}[L^{-2}\|\mb{w}^{k} - B^\infty \mb{w}^{k} \|^2 _{\bpi}]
  \end{array} } \right], \quad
  &G_\alpha := \left[ {\begin{array}{c c c c}
    \frac{1+\lambda^2}{2}  &0 &0 & \frac{2 \alpha^2 L^2}{1-\lambda^2}  \\
    \frac{2 \alpha L^2 \psi {\ol{\pi}}}{\mu} & 1- \frac{\alpha \mu}{2} & \frac{2 \alpha^2 L^2}{n} &0 \\
    \frac{2 \psi {\ol{\pi}}}{m} & \frac{2}{m} &1-\frac{1}{M} &0 \\
    \frac{188 \psi}{1-\lambda^2} &\frac{169\ul{\pi}^{-1}}{1-\lambda^2} &\frac{38\ul{\pi}^{-1}}{1-\lambda^2} &\frac{3+\lambda^2}{4}
  \end{array} } \right] , \\
  \mb{s}^{k} := \left[ {\begin{array}{c}
    \mbb{E} [\|\mb{x}^{k} \|^2_2]\\
    0\\
    0\\
    0
  \end{array} } \right], \quad
  &H_k := \left[ {\begin{array}{c c c c}
    0 &0 &0 &0 \\
    \frac{2 \alpha L^2 \psi}{\mu} &0 &0 &0 \\
    \frac{2 \psi}{m} &0 &0 &0 \\
    \tfrac{188\psi^2}{1-\lambda^2} &0 &0 &0
  \end{array} } \right]T\lambda^k;
\end{array}
\end{align}
where~$m:=\min_i m_i,M:=\max_i m_i$, and~$\psi:=yy_-^2(1+T) h$ is defined as the directivity constant. For the stepsize~$0 < \alpha \leq \frac{1-\lambda^2}{28 L \kappa \psi}$, we have
\begin{align} \label{sys_eq}
    \mb{u}^{k} &\leq G_\alpha \mb{u}^{k-1} + H_{k-1} \mb{s}^{k-1}.
\end{align}
\end{lem}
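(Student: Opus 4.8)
The plan is to derive four scalar inequalities---one for each component of $\mb u^k$---by tracking the four error quantities through one iteration of the system \eqref{sys_vec}, and then to assemble them into the claimed matrix form \eqref{sys_eq}. The analysis should be carried out in the $\bpi$-weighted norm for the consensus-type errors (first and fourth components), since it is only in $\mn{\cdot}_{\bpi}$ that $\ul B$ contracts, while the optimality gap and the auxiliary gap are naturally handled in the standard $2$-norm. Throughout, I would repeatedly use the norm-equivalence bounds $\|\cdot\|_{\bpi}\le\ul\pi^{-1/2}\|\cdot\|_2$ and $\|\cdot\|_2\le\ol\pi^{1/2}\|\cdot\|_{\bpi}$, the contraction $\mn{\ul B-\ul B^\infty}_{\bpi}=\lambda$, the identities $\mn{\ul B}_{\bpi}=\mn{\ul B^\infty}_{\bpi}=\mn{I_n-\ul B^\infty}_{\bpi}=1$, Lemma~\ref{y_lem} for the $Y_k\to Y^\infty$ rate, and $L$-smoothness / $\mu$-strong convexity from Assumption~\ref{smooth_convex}.

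First I would handle the \emph{network agreement error} $\mbb E\|\mb x^{k}-B^\infty\mb x^{k}\|_{\bpi}^2$. Writing $\mb x^{k}-B^\infty\mb x^{k}=(B-B^\infty)(\mb x^{k-1}-B^\infty\mb x^{k-1})-\alpha(I-B^\infty)\mb w^{k-1}$ and inserting $B^\infty\mb w^{k-1}$, a Young's inequality with parameter tuned to $\lambda^2$ yields the $\tfrac{1+\lambda^2}{2}$ diagonal coefficient and a $\tfrac{2\alpha^2}{1-\lambda^2}$ multiplier on $\mbb E\|\mb w^{k-1}-B^\infty\mb w^{k-1}\|_{\bpi}^2$, which after the $L^{-2}$ rescaling in $\mb u^k$ gives the $\tfrac{2\alpha^2L^2}{1-\lambda^2}$ entry; the $Y_k$-discrepancy does not appear here, so the $H_k$ row is zero. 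Next, for the \emph{optimality gap}, I would average \eqref{sys_vec}: since $\mb 1_n^\top B=\mb 1_n^\top$ and $\mb 1_n^\top B^\infty=\mb 1_n^\top$, the tracker average is preserved, $\ol{\mb w}^k=\tfrac1n\sum_i\mb g_i^k$, and $\ol{\mb x}^{k+1}=\ol{\mb x}^{k}-\alpha\ol{\mb w}^{k}$. The standard descent argument---expanding $\|\ol{\mb x}^{k}-\alpha\ol{\mb w}^{k}-\mb z^*\|_2^2$, using that $\mbb E[\mb g_i^k\,|\,\mathcal F^{k}]=\nabla f_i(\mb z_i^{k})$ by the SAGA unbiasedness, bounding $\|\mb z_i^{k}-\ol{\mb x}^{k}\|_2$ by the consensus error plus a $Y_k^{-1}-Y^\infty$ term, and invoking $\mu$-strong convexity for the $1-\tfrac{\alpha\mu}{2}$ contraction---produces the second row, with the $\psi\ol\pi$ factor emerging from converting the $\|\mb z_i^k-\ol{\mb x}^k\|$ bound back into the $\bpi$-norm consensus error, and the $T\lambda^k$ remainder feeding $H_k$.

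The \emph{mean auxiliary gap} row is the most bookkeeping-heavy: conditioning on the random index $s_i^{k+1}$, the probability a given $\mb v_{i,j}$ is refreshed to $\mb z_i^{k+1}$ is $1/m_i$, so $\mbb E[\mb t^{k+1}]\le(1-\tfrac1M)\mbb E[\mb t^{k}]+\sum_i\tfrac1{m_i}\mbb E\|\mb z_i^{k+1}-\mb z^*\|_2^2$, and splitting $\|\mb z_i^{k+1}-\mb z^*\|_2\le\|\mb z_i^{k+1}-\ol{\mb x}^{k+1}\|_2+\|\ol{\mb x}^{k+1}-\mb z^*\|_2$ gives the $\tfrac{2\psi\ol\pi}{m}$, $\tfrac2m$, $1-\tfrac1M$ entries together with the $T\lambda^k$ term. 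I expect the \emph{gradient tracking error} (fourth row) to be the main obstacle, since it both couples back to all three other errors and carries the large numerical constants $188,169,38$. Here one expands $\mb w^{k}-B^\infty\mb w^{k}=(B-B^\infty)(\mb w^{k-1}-B^\infty\mb w^{k-1})+(I-B^\infty)(\mb g^{k}-\mb g^{k-1})$, so the contraction gives a $\tfrac{1+\lambda^2}{2}$-type factor that must be absorbed into $\tfrac{3+\lambda^2}{4}$ after a Young split against the $\mbb E\|\mb g^{k}-\mb g^{k-1}\|^2$ term; the crux is then bounding $\mbb E\|\mb g^{k}-\mb g^{k-1}\|^2$ using $L$-smoothness of each $f_{i,j}$ and the SAGA variance-reduction identity, expressing $\|\mb z_i^{k}-\mb z_i^{k-1}\|$, $\|\mb z_i^{k}-\mb z^*\|$, $\|\mb v_{i,j}^{k}-\mb z^*\|$ in terms of the four $\mb u$-coordinates---which is where the consensus error, optimality gap, auxiliary gap, and the restriction $\alpha\le\tfrac{1-\lambda^2}{28L\kappa\psi}$ all enter to keep every coefficient below the stated constants, with the leftover $T\lambda^k\mbb E\|\mb x^{k-1}\|_2^2$ pieces collected into $H_{k-1}\mb s^{k-1}$. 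Finally I would verify that the stepsize bound makes each off-diagonal contribution consistent with the matrix $G_\alpha$ as written, completing \eqref{sys_eq}.
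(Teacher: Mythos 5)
Your proposal is correct and follows essentially the same route as the paper's proof: the same four-step decomposition (agreement error via Young's inequality and the $\bpi$-norm contraction of $\ul B-\ul B^\infty$; optimality gap via averaging, SAGA unbiasedness, and strong convexity; auxiliary gap via the $1/m_i$ refresh probability; gradient tracking error via a Young split against $\|\mb g^{k}-\mb g^{k-1}\|^2$ absorbed into $\tfrac{3+\lambda^2}{4}$ using the stepsize bound), with the $T\lambda^k$ remainders from Lemma~\ref{y_lem} collected into $H_k\mb s^k$ exactly as in the paper. The only nit is an off-by-one in the auxiliary-gap step: per the algorithm, $\mb v^{k+1}_{i,j}$ is refreshed to $\mb z_i^{k}$ (not $\mb z_i^{k+1}$), which is what keeps the third row coupled to step-$k$ quantities.
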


The proof of the above lemma is available in the supplementary material. We note that the constant~$\psi\geq1$ can be interpreted as a directivity constant, i.e., for undirected graphs~$\psi=1$. It is of interest to observe that the LTI system described in~\eqref{lti} reduces to the one derived in~\cite{GTVR} for undirected graphs, where~$\psi=1$ and~$T=0$. Our strategy to establish linear convergence of~$\PS$ is to first show that~$\mb u^k$ converges linearly to a zero vector, which subsequently leads to~$\mb z^k_i\ra\mb z^*$ linearly at each node~$i$. We establish the linear decay of~$G_\alpha$ in the following lemma. 

\begin{lem} \label{G_lem}
Consider~\PS~in~\eqref{sys_vec}~under Assumptions~\ref{col_stoc_w} and~\ref{smooth_convex}. If the stepsize~$\alpha$ is such that~${\alpha\in (0, \ol{\alpha})}$, where ${\ol{\alpha} := \min \left \{ \frac{1}{5 M \mu}, \frac{m}{M} \frac{(1-\lambda)^2}{400 L \kappa \psi } \right \}}$, then
\[\rho(G_\alpha) \leq \mn{G_\alpha }_\infty^{\bds\delta} \leq \gamma := 1 - \min \left \{ \frac{1}{20 M}, \frac{m}{1600 M }\frac{(1-\lambda)^2}{\kappa^2 \psi} \right \} <1,
\]
where~$\rho(\cdot)$ denotes the spectral radius of a matrix and~$\mn{\cdot}_\infty^{\bds\delta}$ is the matrix norm induced by the weighted max-norm~$\|\cdot\|_\infty^{\bds\delta}$, with respect to some positive vector~${\boldsymbol \delta}$.
\end{lem}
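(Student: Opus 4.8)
The plan is to produce a positive weight vector $\bds\delta=(\delta_1,\delta_2,\delta_3,\delta_4)^\top$ and verify $\mn{G_\alpha}_\infty^{\bds\delta}\le\gamma$ directly; since $\rho(G_\alpha)\le\mn{G_\alpha}_\infty^{\bds\delta}$ for every such $\bds\delta$, this proves the lemma. Because $G_\alpha$ is entrywise nonnegative, $\mn{G_\alpha}_\infty^{\bds\delta}=\max_{1\le i\le4}\delta_i^{-1}(G_\alpha\bds\delta)_i$, so everything reduces to the four scalar inequalities $(G_\alpha\bds\delta)_i\le\gamma\,\delta_i$, $i=1,\dots,4$. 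Reading off the rows of $G_\alpha$, the $i$-th inequality requires the weighted off-diagonal mass of row~$i$ to fit inside the slack $\bigl(\gamma-(G_\alpha)_{ii}\bigr)\delta_i$; modulo the small correction $1-\gamma$ these slacks are $\tfrac{1-\lambda^2}{2}\delta_1$, $\tfrac{\alpha\mu}{2}\delta_2$, $\tfrac1M\delta_3$, and $\tfrac{1-\lambda^2}{4}\delta_4$ for rows $1$ through $4$.

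The construction of $\bds\delta$ is the crux. The key observation is that the first coordinate — which appears in the off-diagonal part of every other row — is pinned by its own row-$1$ inequality to a fixed multiple of $\tfrac{\alpha^2L^2}{(1-\lambda^2)^2}\,\delta_4$, hence is $\mc O(\alpha^2)$ and negligible for small $\alpha$. Dropping the $\delta_1$-terms therefore decouples the system, and I would solve it in the order $\delta_2,\delta_3,\delta_4,\delta_1$: set $\delta_2=1$; row~$3$ then forces $\delta_3$ to be a fixed multiple of $\tfrac{M}{m}$; row~$4$ forces $\delta_4$ to be a fixed multiple of $\tfrac{\ul\pi^{-1}}{(1-\lambda^2)^2}\tfrac{M}{m}$; and row~$1$ finally fixes $\delta_1$ as a multiple of $\tfrac{\alpha^2L^2\ul\pi^{-1}}{(1-\lambda^2)^4}\tfrac{M}{m}$. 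One then returns to rows $3$ and $4$ to confirm that with this $\delta_1$ the terms we dropped are genuinely lower order (this is where $\ol\alpha\le\tfrac{m}{M}\tfrac{(1-\lambda)^2}{400L\kappa\psi}$ enters, together with $L/\kappa=\mu$, $\kappa\ge1$, $\tfrac{1-\lambda}{1-\lambda^2}\le1$, and $\tfrac{h}{\psi}=\bigl(yy_-^2(1+T)\bigr)^{-1}\le1$), and to rows $1$, $3$, $4$ to confirm the $1-\gamma$ correction is absorbed (using $1-\gamma\le\tfrac{(1-\lambda)^2}{1600}\ll1-\lambda^2$ for rows $1$ and $4$ and $1-\gamma\le\tfrac1{20M}$ for row~$3$).

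What remains is the row-$2$ inequality $\tfrac{2\alpha L^2\psi\ol\pi}{\mu}\delta_1+\tfrac{2\alpha^2L^2}{n}\delta_3\le\bigl(\tfrac{\alpha\mu}{2}-(1-\gamma)\bigr)$. After substituting $\delta_1=\mc O(\alpha^2)$ and $\delta_3=\mc O(M/m)$, both terms on the left are $\mc O(\alpha^2)$ while the slack on the right is $\mc O(\alpha)$, so for $\alpha$ small the inequality holds; the two pieces of $\ol\alpha$ make the left side small enough and calibrate $1-\gamma$ against $\tfrac{\alpha\mu}{2}$, so that the constants combine into the stated $\gamma$. Collecting the four verified inequalities gives $\mn{G_\alpha}_\infty^{\bds\delta}\le\gamma$ and hence $\rho(G_\alpha)\le\gamma<1$.

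The principal obstacle is precisely the circular nature of the defining inequalities for $\bds\delta$: one cannot pick the weights greedily row by row because each row references coordinates fixed later. The resolution is the $\alpha^2$ factor sitting in front of $\delta_4$ in the row-$1$ inequality, which makes $\delta_1$ a vanishingly small perturbation and lets the back-substitution terminate; after that the remaining effort is careful bookkeeping of the numerical constants and of the parameters $\ol\pi,\ul\pi,h,y,y_-,T,\psi$ so as to land on the exact thresholds $\ol\alpha$ and $\gamma$ in the statement.
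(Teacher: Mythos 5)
Your overall strategy coincides with the paper's: both proofs use the fact that, for a nonnegative matrix, $G_\alpha\bds\delta\le\gamma\bds\delta$ entrywise for some positive $\bds\delta$ implies $\rho(G_\alpha)\le\mn{G_\alpha}_\infty^{\bds\delta}\le\gamma$, and both construct $\bds\delta$ by essentially the same back-substitution through the rows. (The paper normalizes $\delta_1=1$ and takes $\delta_2,\delta_3,\delta_4$ large and independent of $\alpha$, with $\delta_2\propto\kappa^2\psi\ol{\pi}$, $\delta_3\propto\tfrac{M}{m}\kappa^2\psi\ol{\pi}$, $\delta_4\propto\tfrac{M}{m}\tfrac{\kappa^2\psi h}{(1-\lambda^2)^2}$; your rescaling with $\delta_2=1$ and an $\alpha$-dependent $\delta_1=\mc O(\alpha^2)$ is an equivalent and equally legitimate choice, since shrinking $\delta_1$ only helps in rows $2$, $3$, $4$ and row $1$ is tight by construction.)

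The genuine gap is in row $2$. You treat $\gamma$ as the fixed constant in the statement, so that $1-\gamma=\min\{\tfrac{1}{20M},\tfrac{m(1-\lambda)^2}{1600M\kappa^2\psi}\}=\tfrac{\ol{\alpha}\mu}{4}$ does not depend on $\alpha$, and the row-$2$ slack $\tfrac{\alpha\mu}{2}-(1-\gamma)=\tfrac{\alpha\mu}{2}-\tfrac{\ol{\alpha}\mu}{4}$ is \emph{negative} for every $\alpha<\ol{\alpha}/2$: the diagonal entry $1-\tfrac{\alpha\mu}{2}$ alone already exceeds $\gamma$, so no positive $\bds\delta$ can satisfy $(G_\alpha\bds\delta)_2\le\gamma\delta_2$ there. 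Your sentence ``the slack on the right is $\mc O(\alpha)$, so for $\alpha$ small the inequality holds'' is exactly backwards — small $\alpha$ is where the argument fails, and indeed $\rho(G_\alpha)\to1$ as $\alpha\to0$ because $G_0$ is lower triangular with a unit diagonal entry, so the lemma with an $\alpha$-independent $\gamma$ cannot hold uniformly on $(0,\ol{\alpha})$. The paper sidesteps this by carrying an $\alpha$-dependent rate $\gamma=1-\tfrac{\alpha\mu}{4}$ through all four inequalities, so that every slack scales with $\alpha$, and only substitutes $\alpha=\ol{\alpha}$ at the very end to obtain the displayed numerical value (consistent with Theorem~\ref{th_main}, which is stated ``for $\alpha=\ol{\alpha}$''). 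To repair your proof, replace the fixed $\gamma$ by $\gamma(\alpha)=1-\tfrac{\alpha\mu}{4}$ throughout, verify the four inequalities for all $\alpha\in(0,\ol{\alpha}]$ (your choices of $\delta_2,\delta_3,\delta_4,\delta_1$ and the bookkeeping with $\ol{\pi},\ul{\pi},h,\psi$ then go through essentially as you describe), and evaluate at $\alpha=\ol{\alpha}$ at the end.
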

The proof can be found in the supplementary material. It can be verified that~$\lambda\leq\rho(G_\alpha)$ that leads to the fact that~$H_k$ in~\eqref{sys_eq} decays faster than~$G_\alpha$. We next characterize the convergence of~$\|\mb u^k\|_2$ to zero. 

\begin{lem}
Consider~\PS~in~\eqref{sys_vec}~under Assumptions~\ref{col_stoc_w} and~\ref{smooth_convex}. For~${\alpha\in (0, \ol{\alpha})}$, we have that~$\|\mb u^k\|_2$ converges to zero linearly at~$\mc O((\gamma+\xi)^k)$, where~$\lambda<1$ and~$\xi>0$ is arbitrarily small.
\end{lem}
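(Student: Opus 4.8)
The plan is to unroll the linear matrix recursion~\eqref{sys_eq}, carry out all estimates in the weighted max-norm~$\|\cdot\|_\infty^{\bds\delta}$ supplied by Lemma~\ref{G_lem}, and then control the inhomogeneous term~$H_{k-1}\mb s^{k-1}$ by bounding it back in terms of the error vector~$\mb u^k$. First I would observe that~\eqref{sys_eq} is an \emph{entrywise} inequality between nonnegative vectors (each coordinate of~$\mb u^k$ is an expected squared (semi)norm, $G_\alpha$ has nonnegative entries, and~$H_{k-1}\mb s^{k-1}\geq\bfzero$), so iterating it downward gives, entrywise, $\mb u^k \leq G_\alpha^k\mb u^0 + \sum_{\ell=0}^{k-1} G_\alpha^{k-1-\ell}H_\ell\mb s^\ell$. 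Since~$\|\cdot\|_\infty^{\bds\delta}$ is monotone on the nonnegative orthant (as~$\bds\delta>\bfzero$) and~$\mn{G_\alpha}_\infty^{\bds\delta}\leq\gamma$ by Lemma~\ref{G_lem}, this yields, with~$a_k:=\|\mb u^k\|_\infty^{\bds\delta}$,
\[
a_k \;\leq\; \gamma^k a_0 \;+\; \sum_{\ell=0}^{k-1}\gamma^{k-1-\ell}\,\big\|H_\ell\mb s^\ell\big\|_\infty^{\bds\delta}.
\]

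Next I would bound the forcing term. Because~$H_\ell = T\lambda^\ell\wt H$ for a fixed matrix~$\wt H$ whose only nonzero column is the first, and~$\mb s^\ell$ has~$\mbb E[\|\mb x^\ell\|_2^2]$ as its only nonzero entry, we get~$\|H_\ell\mb s^\ell\|_\infty^{\bds\delta}\leq c_1 T\lambda^\ell\,\mbb E[\|\mb x^\ell\|_2^2]$ for a constant~$c_1$ depending only on~$\alpha,L,\mu,m,\psi,\lambda,\bds\delta$. To estimate~$\mbb E[\|\mb x^\ell\|_2^2]$ I would split~$\mb x^\ell = (\mb x^\ell - B^\infty\mb x^\ell) + B^\infty\mb x^\ell$ with~$B^\infty\mb x^\ell = n(\bpi\otimes\ol{\mb x}^\ell)$, apply~$\|\cdot\|_2\leq\sqrt{\ol{\pi}}\,\|\cdot\|_{\bpi}$ to the consensus-error part and~$\|\ol{\mb x}^\ell\|_2\leq\|\ol{\mb x}^\ell-\mb z^*\|_2+\|\mb z^*\|_2$ to the mean part, and recognize the first two coordinates of~$\mb u^\ell$; this gives~$\mbb E[\|\mb x^\ell\|_2^2]\leq c_2\,a_\ell + c_3$ with~$c_2,c_3$ depending on~$n,\bpi,\mb z^*,\bds\delta$. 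Substituting, and recalling that~$\lambda\leq\rho(G_\alpha)\leq\gamma$ (as noted after Lemma~\ref{G_lem}), I arrive at the self-referential estimate
\[
a_k \;\leq\; \gamma^k a_0 \;+\; c_4 T\sum_{\ell=0}^{k-1}\gamma^{k-1-\ell}\lambda^\ell\,(c_2 a_\ell + c_3).
\]

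The hard part is precisely this self-reference: the forcing involves~$a_\ell$, which has not yet been shown to decay---only to be~$O(1)$-comparable to the quantity being bounded. The loop is broken by the observation that the coefficient~$c_4Tc_2\lambda^\ell$ vanishes geometrically, so it is an asymptotically negligible perturbation of the strict contraction factor~$\gamma$. Concretely, fix any~$\xi\in(0,1-\gamma)$ and choose~$k_0$ so that~$c_4Tc_2\lambda^{k_0}\leq\xi$; then for~$k>k_0$ the terms with~$\ell\geq k_0$ satisfy~$c_4Tc_2\lambda^\ell a_\ell\leq\xi a_\ell$, while the finitely many terms with~$\ell<k_0$ contribute only a fixed multiple of~$\lambda^k$, which can be absorbed into the initial condition at~$k_0$. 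This collapses the bound to a scalar recursion~$a_k\leq(\gamma+\xi)a_{k-1}+C\lambda^{k-1}$ valid for~$k\geq k_0$; since~$\lambda\leq\gamma<\gamma+\xi$, unrolling it and summing the resulting geometric convolution gives~$a_k=\mc O\big((\gamma+\xi)^k\big)$, with no residual polynomial-in-$k$ factor. Finally, equivalence of~$\|\cdot\|_2$ and~$\|\cdot\|_\infty^{\bds\delta}$ on~$\mbb R^4$ yields~$\|\mb u^k\|_2\leq c_5 a_k=\mc O((\gamma+\xi)^k)$, which is the claim; the hypothesis~$\alpha\in(0,\ol{\alpha})$ enters only through the invocation of Lemmas~\ref{main_lem} and~\ref{G_lem} (and one checks that~$\ol{\alpha}\leq\tfrac{1-\lambda^2}{28L\kappa\psi}$, so the stepsize ceiling of Lemma~\ref{main_lem} is respected).
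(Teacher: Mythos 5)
Your proposal is correct and shares the paper's skeleton—unroll the recursion $\mb u^k \leq G_\alpha^k\mb u^0+\sum_{\ell}G_\alpha^{k-1-\ell}H_\ell\mb s^\ell$, then bound $\mbb E[\|\mb x^\ell\|_2^2]$ by a constant multiple of $\|\mb u^\ell\|$ plus a constant via the split $\mb x^\ell=(\mb x^\ell-B^\infty\mb x^\ell)+B^\infty\mb x^\ell$ (the paper's version is $\|\mb s^r\|_2\leq 6(y^2+\ol\pi)\|\mb u^r\|_2+3y^2n\|\mb z^*\|_2^2$)—but it breaks the resulting self-reference differently. The paper crudely majorizes $\gamma^{k-r-1}\lambda^r\leq\gamma^{k-1}$, obtains $\|\mb u^k\|_2\leq(\Gamma_1+kc+b\sum_{r<k}\|\mb u^r\|_2)\gamma^k$, and invokes a Polyak-type lemma on $v_{k+1}\leq(1+b_k)v_k+c_k$ with summable $b_k,c_k$ to conclude that $\sum_{r<k}\|\mb u^r\|_2$ is bounded; the residual linear-in-$k$ prefactor is then absorbed by passing to any $\theta\in(\gamma,1)$. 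You instead keep the $\lambda^\ell$ weight, note that the coupling coefficient $c_4Tc_2\lambda^\ell$ is eventually below $\xi$, and absorb it into the contraction factor to get a clean one-step recursion $b_k\leq(\gamma+\xi)b_{k-1}+C\lambda^{k-1}$ (the step "collapses to a scalar recursion" is most cleanly justified by observing that the right-hand side $b_k$ of your convolution bound itself satisfies $b_k=\gamma b_{k-1}+c_4T\lambda^{k-1}(c_2a_{k-1}+c_3)$ with $a_{k-1}\leq b_{k-1}$). Your route avoids the external summable-perturbation lemma and never produces a polynomial factor to be absorbed, at the price of a slightly more delicate bookkeeping around $k_0$; one small inaccuracy is that the $\ell<k_0$ tail contributes a multiple of $\gamma^k$ rather than $\lambda^k$, which is harmless since $\gamma<\gamma+\xi$. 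Your closing sanity check that $\ol\alpha\leq\tfrac{1-\lambda^2}{28L\kappa\psi}$ is also right, since $\ol\alpha\leq\tfrac{m}{M}\tfrac{(1-\lambda)^2}{400L\kappa\psi}\leq\tfrac{1-\lambda^2}{400L\kappa\psi}$.
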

\begin{proof}
By expanding~\eqref{sys_eq} and taking the norm on both sides, we have 
\begin{align*}
\|\mb u^k\|_2 &\leq \|G_\alpha^k \mb u^0\|_2 + \textstyle\sum_{r=0}^{k-1} \|G_\alpha ^{k-r-1}H_r~\mb s^{r}\|_2,\\
&\leq \big(\Gamma_1 + \Gamma_2\textstyle\sum_{r=0}^{k-1} \|\mb s^{r}\|_2\big)\gamma^{k},
\end{align*}
for some constants~$\Gamma_1>0$ and~$\Gamma_2>0$ and we have used that~$\lambda\leq\rho(G_\alpha)\leq\gamma$. It can be shown that
\begin{align*}
\|\mb{s}^r\|_2 \leq  6(y^2 + \ol{\pi})\|\mb{u}^r\|_{2} + 3 y^2 n \|\mb{z}^*\|^2_2, 
\end{align*}
which leads to (after using~$b:=6\Gamma_2(y^2 + \ol{\pi})$ and~$c:=3\Gamma_2 y^2 n \|\mb{z}^*\|^2_2$)
\begin{align*}
\|\mb u^k\|_2 \leq \big(\Gamma_1 + kc +b\textstyle\sum_{r=0}^{k-1} \|\mb{u}^r\|_{2} \big) \gamma^{k}.
\end{align*}
Let~$v_k := \sum_{r=0}^{k-1} \|\mb{u}^r\|_2, c_k := (\Gamma_1+kc) \gamma^{k}$, and~$b_k := b \gamma^{k}$. Then, we have
\begin{align} \label{u_eq1}
\|\mb{u}^{k} \|_2 &= v_{k+1} - v_{k}\leq (\Gamma_1 + kc + b v_k) \gamma^{k} \iff v_{k+1} \leq (1 + b_k) v_k + c_k.
\end{align}
For non-negative sequences~$\{v_k\},\{b_k\}$, and~$\{c_k\}$, such that~${v_{k+1} \leq (1 + b_k) v_k + c_k}$, for all~$k$, and ${\sum_{k=0}^{\infty} b_k < \infty}$ and~${\sum_{k=0}^{\infty} c_k < \infty}$, we have from~\cite{polyak} that~$\{v_k\}$ converges and is therefore bounded. Hence,~${\forall \theta \in (\gamma, 1)}$, we can write~\eqref{u_eq1} as
\begin{align*}
    \lim_{k \rightarrow \infty} \frac{\|\mb{u}^k\|_2}{\theta^k} \leq \lim_{k \rightarrow \infty} \frac{(\Gamma_1 + kc + b v_k) \gamma^{k}}{\theta^k} = 0.
\end{align*}
In other words, there exist some~$\phi>0$ such that for all~$k$,
\begin{align} \label{u_eq2}
    \| \mb{u}^{k} \|_2 \leq \phi (\gamma + \xi)^{k},
\end{align}
where~$\xi>0$ is arbitrarily small. 
\end{proof}

With the help of the above lemma, we are now in a position to prove Theorem~\ref{th_main}.

\subsection{Proof of Theorem~\ref{th_main}}
Recall that~$\mb z_i^k$ is the estimate of~$\mb z^*$ at node~$i$ and iteration~$k$, and~$\mb z^k$ concatenates these local estimates in a vector in~$\mbb R^{pn}$. We have that
\begin{align*}
\mbb E[\|\mb{z}^k - (\mb{1}_n \otimes \mb{z}^*)\|^2_2] &= \mbb E[\|Y^{-1}_k \mb{x}^k - Y^{-1}_k Y^\infty (\mb{1}_n \otimes \ol{\mb{x}}^k) + Y^{-1}_k Y^\infty (\mb{1}_n \otimes \ol{\mb{x}}^k) \\
&~~~~- Y^{-1}_k Y^\infty (\mb{1}_n \otimes \mb{z}^*) + Y^{-1}_k Y^\infty (\mb{1}_n \otimes \mb{z}^*) - (\mb{1}_n \otimes \mb{z}^*) \|^2_2] \\
&\leq 3 y_-^2 \mbb E[\|\mb{x}^k - B^\infty \mb{x}^k \|_2^2] 
+ 3 n y_-^2 y^2 \mbb E[\|\ol{\mb{x}}^k - \mb{z}^* \|^2_2] + 3 n (y_- T \lambda^k)^2 \|\mb{z}^* \|_2^2\\
&\leq 6 y_-^2(y^2 + \ol{\pi}) \|\mb{u}^k \|_2 + 3 n y_-^2 T^2 \gamma^{k} \|\mb{z}^* \|_2^2 \\
&\leq 6 y_-^2(y^2 + \ol{\pi}) \phi (\gamma + \xi)^{k} + 3 n y_-^2 T^2 (\gamma+\xi)^{k} \|\mb{z}^* \|_2^2\\
&\leq \omega (\gamma + \xi)^{k},
\end{align*}
where we use~${Y^\infty(\mb 1_n\otimes \ol{\mb x}^k)=B^\infty\mb x^k}$ and~${\omega := 6 y_-^2(y^2 + \ol{\pi}) \phi  + 3 n y_-^2 T^2 \|\mb{z}^* \|_2^2}$. To reach an~$\epsilon$-accurate solution~$\mbb E[\|\mb{z}^{k} - (\mb{1}_n \otimes \mb{z}^*)\|^2_2] \leq\epsilon$, we thus need
\begin{align*}
\mbb E[\|\mb{z}^{k} - (\mb{1}_n \otimes \mb{z}^*)\|^2_2] \leq (1-(1-(\gamma+\xi)))^k~\omega\leq e^{-(1-(\gamma+\xi))k}~\omega\leq \epsilon,
\end{align*}
which leads to 
\begin{align*}
k\geq \max\left\{\tfrac{20M}{1-20M\xi},\tfrac{1600M\kappa^2 \psi}{m(1-\lambda)^2-1600M\kappa^2 \psi\xi}\right\}\ln~\tfrac{\omega}{\epsilon},
\end{align*}
iterations per node, where recall that~$\xi>0$ is arbitrarily small and the theorem follows.

\section{Numerical Experiments} \label{num_expt}
We now provide numerical experiments to compare the performance of~\PS~with related algorithms implemented over directed graphs with the help of MNIST and CIFAR-10 datasets, where the image sizes are~${28 \! \times \! 28}$~pixels and~${32 \! \times 32 \! \times \! 3}$~pixels, respectively. We study two classical learning techniques: logistic regression with a strongly convex regularizer and neural networks. 

\subsection{Logistic Regression (Strongly convex)} 
We first consider binary  classification of~${N=12,\!000}$~labelled images (taken from two classes in the MNIST and CIFAR-10 datasets) divided among~$n$ nodes with the help of logistic regression with a strongly convex regularizer. We compare~\SGP~(\DSGD~plus push-sum)~\cite{SGP_nedich},~\SA~(\SGP~plus gradient tracking)~\cite{saddopt}, and $\PS$; along with~\GP~\cite{opdirect_Tsianous} and~\ADD~\cite{add-opt,DIGing}, which are the deterministic counterparts of~\SGP~and~\SA, respectively. Note that~\ADD~(\GP~plus gradient tracking) converges linearly to the exact solution  since it uses the full batch data at each node. We plot the optimality gap~$F{(\ol{\mb z}^k) - F(\mb z^*)}$, where~${\ol{\mb z}^k:=\frac{1}{n}\sum_i\mb z_i^k}$, of each algorithm versus the number of epochs, where each epoch represents~$m_i$ gradient computations per node, i.e., one epoch is one iteration of~\GP~and~\ADD~and~$m_i$~iterations for the stochastic algorithms~\SGP,~\SA,~and~\PS.

Fig.~\ref{G1} compares the  algorithms over a ${n=16}$-node exponential graph, when the data is equally divided among the nodes, i.e.,~$m_i=750,\forall i$. This scenario models a controlled training setup over a well-structured communication network as, e.g., in data centers. Similarly, Fig.~\ref{G2} illustrates the performance over a ${n=500}$-node geometric graph when the data is divided arbitrarily among the nodes modeling, e.g., ad hoc edge computing networks. It can be verified that~\PS~converges linearly for smooth and strongly convex problems and is much faster, in terms of epochs, compared with its linearly-convergent non-stochastic counterpart~\ADD~over directed graphs.
\begin{figure}[!h]
\centering
\subfigure{\includegraphics[width=1.95in]{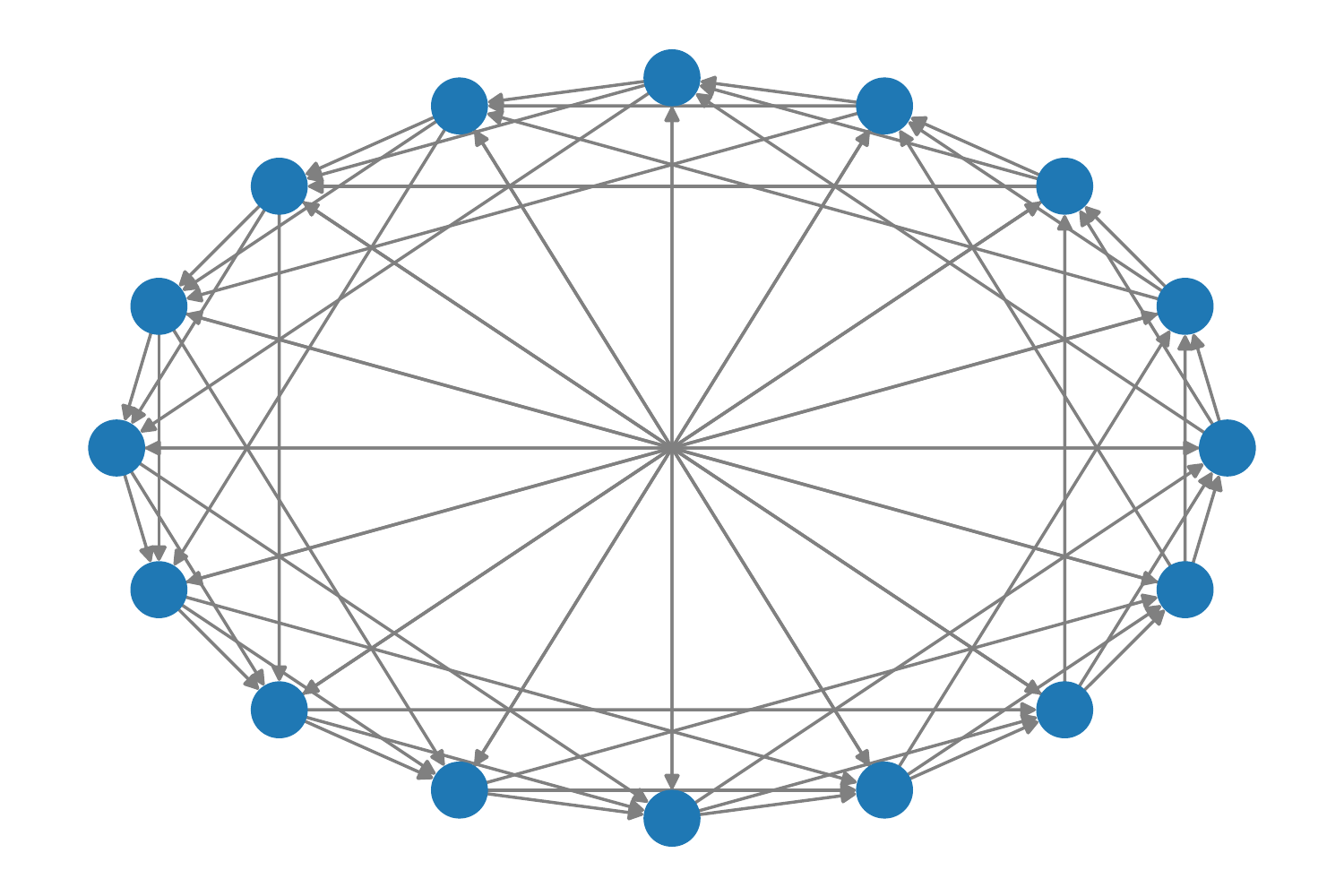}}
\subfigure{\includegraphics[width=1.93in]{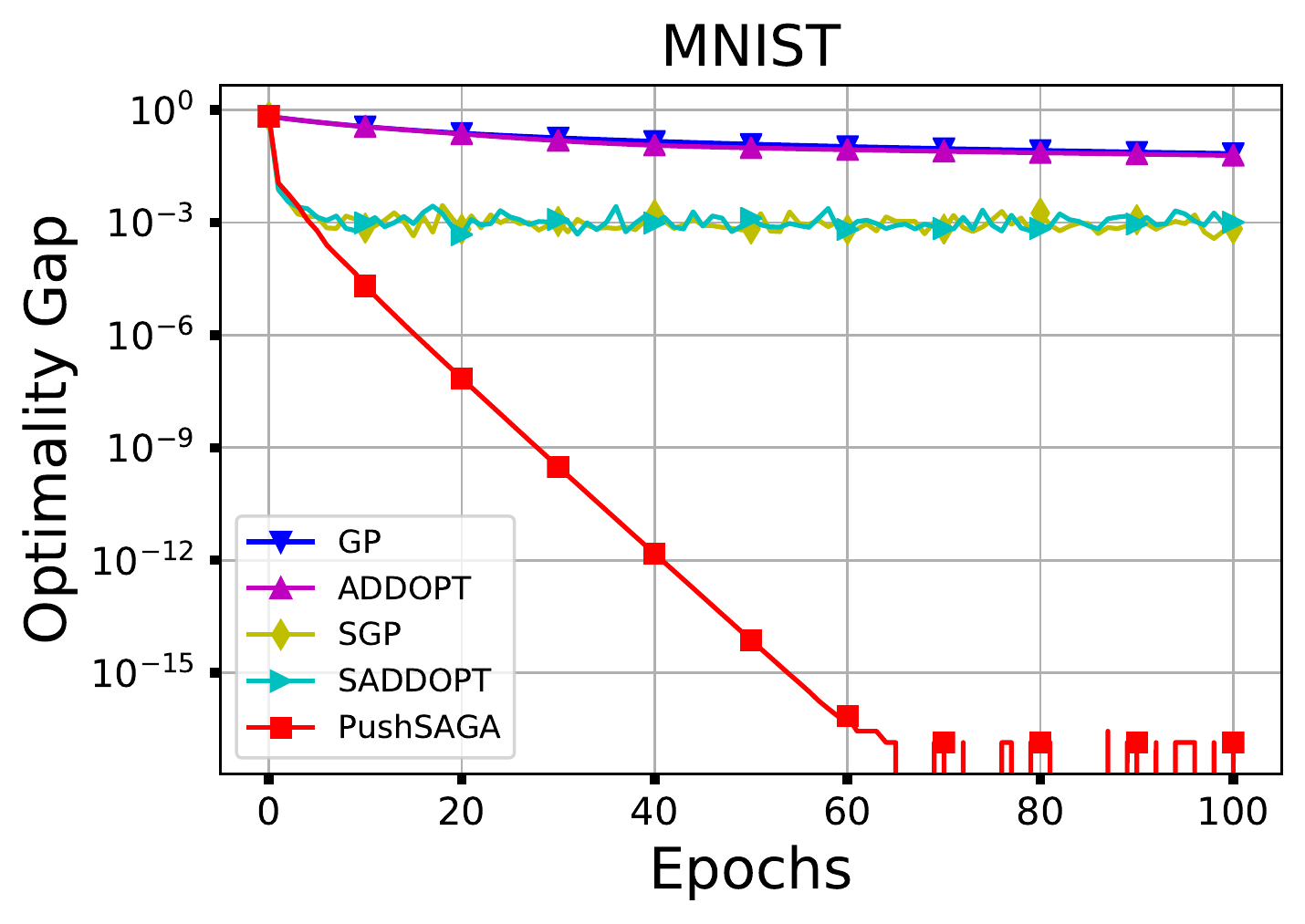}}
\hspace{0.2cm}
\subfigure{\includegraphics[width=1.93in]{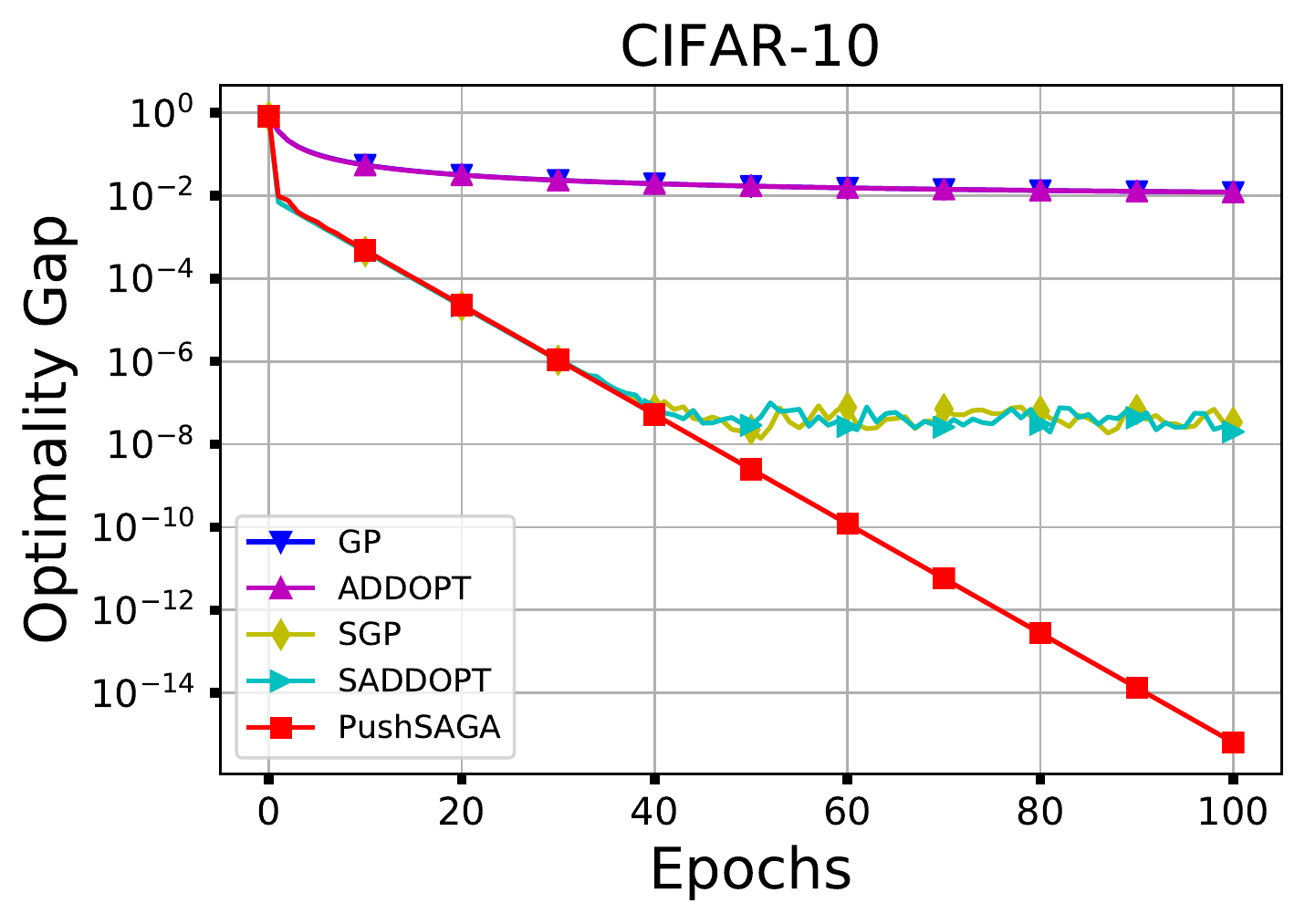}}
\caption{Performance comparison over a directed exponential graph with~$n=16$~nodes.}
\label{G1}
\end{figure}

\begin{figure}[!h]
\centering
\subfigure{\includegraphics[width=1.95in]{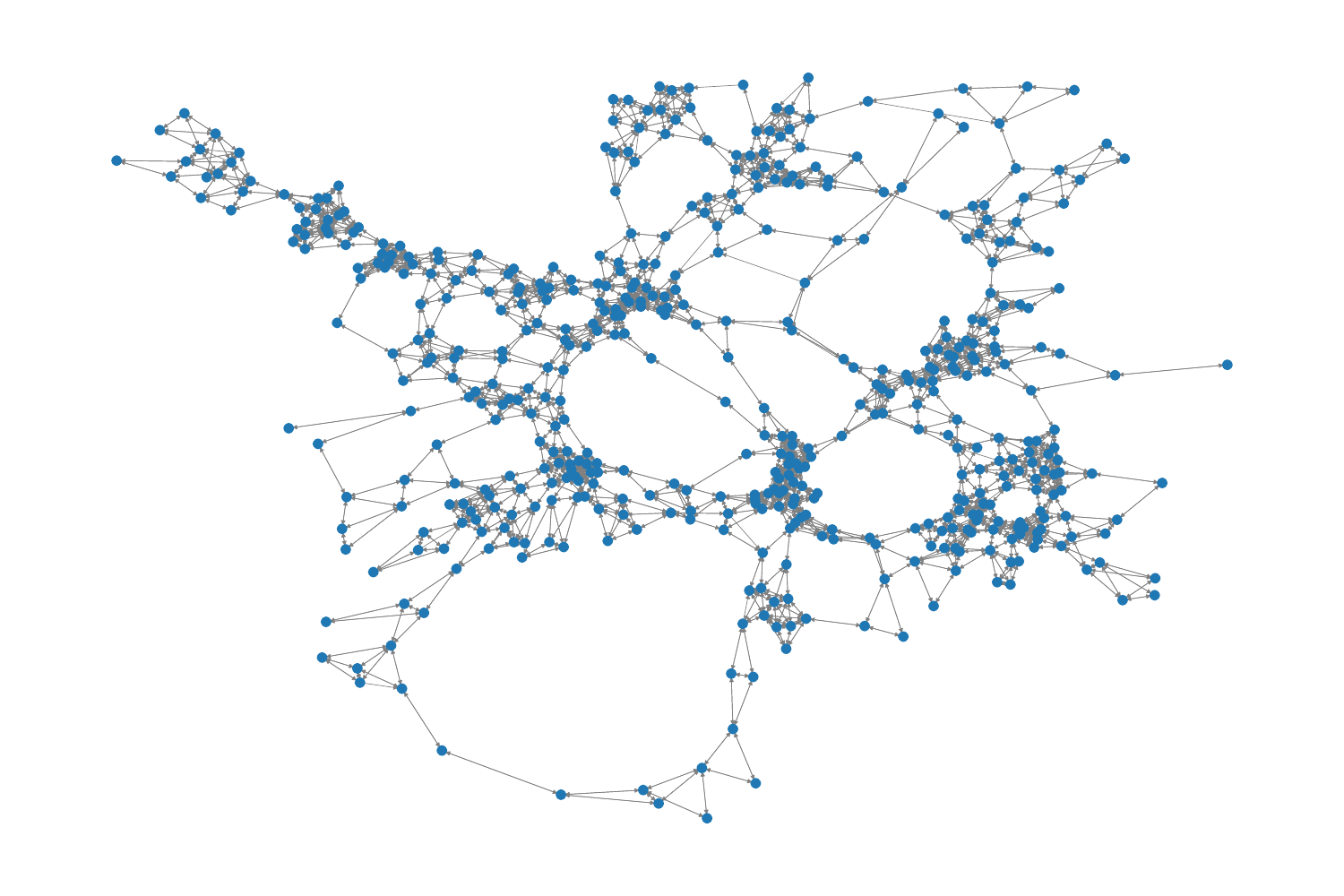}}
\subfigure{\includegraphics[width=1.93in]{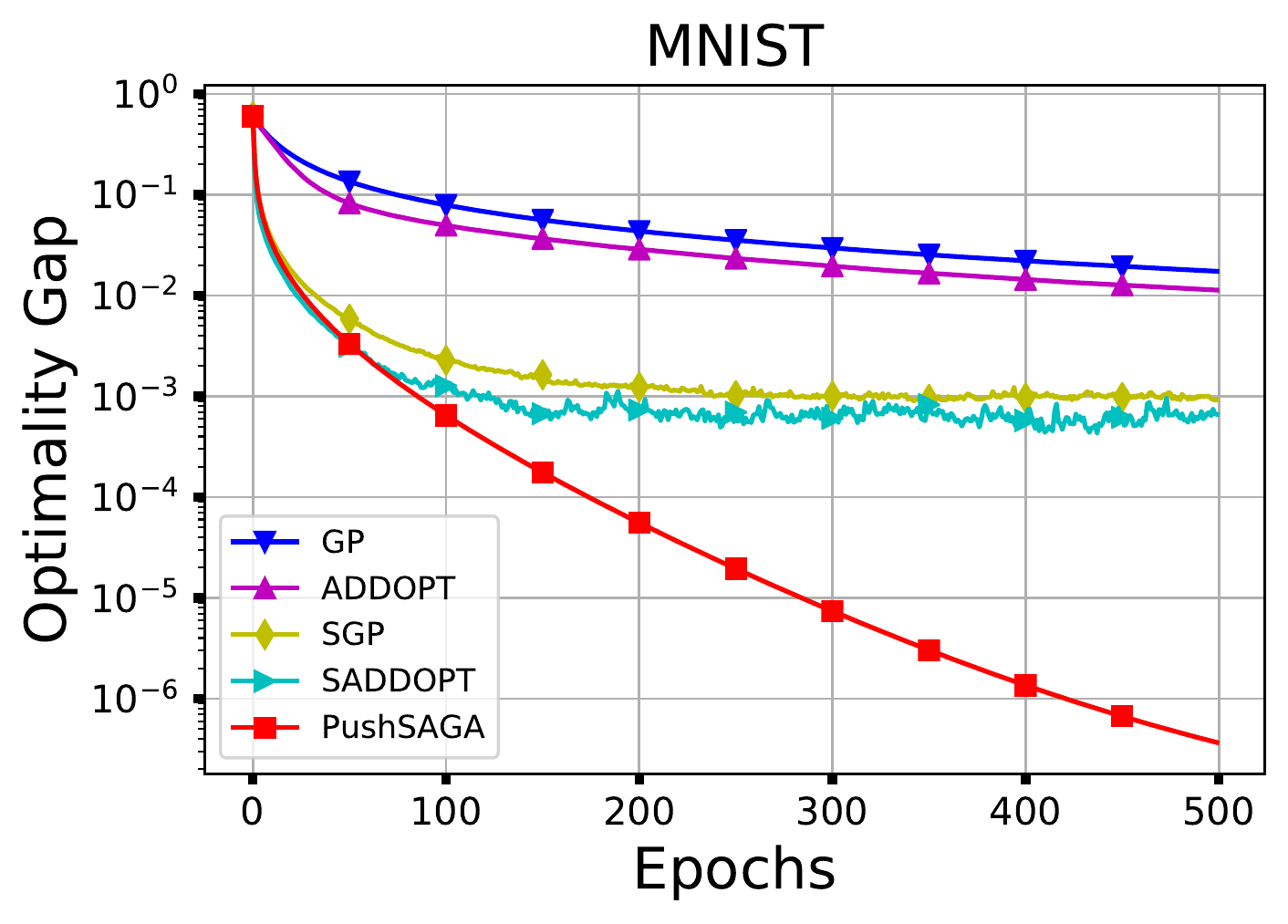}}
\hspace{0.2cm}
\subfigure{\includegraphics[width=1.93in]{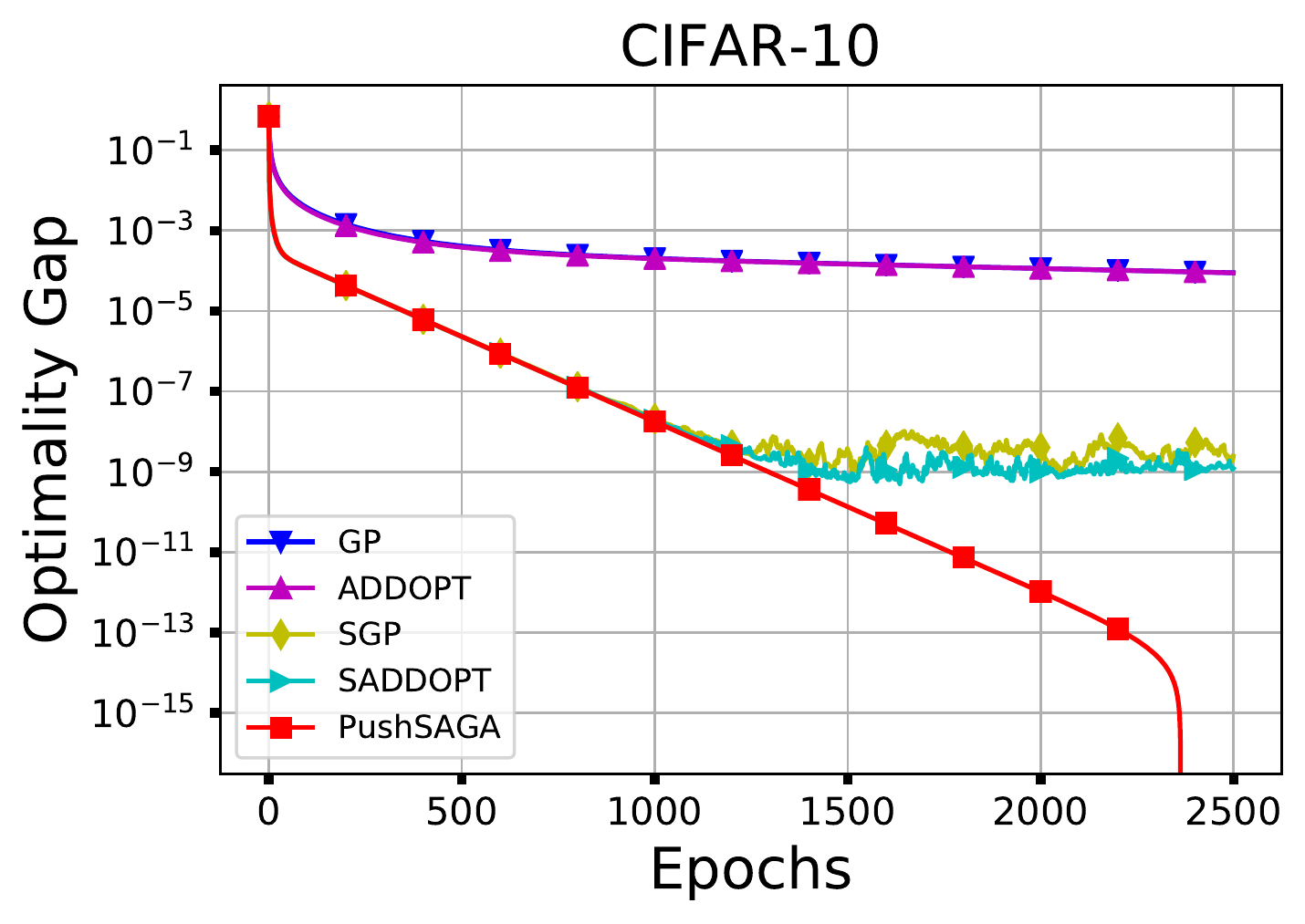}}
\caption{Performance comparison over a directed geometric graph with~$n=500$~nodes.}
\label{G2}
\end{figure}
\newpage
\textbf{Linear speed-up:} We next show the linear speed-up of~\PS~over its centralized counterpart~\SG. For illustration,~\SGP~and~\SA~are also compared with their centralized counterpart~\SGD. To this aim, we study binary classification based on the MNIST dataset over exponential graphs with~$n=4,8,16,32,$ and~$64$ nodes, and plot the ratio of the number of iterations of the centralized algorithm and its decentralized counterpart to reach a certain optimality gap~$\epsilon_1$. We choose~${\epsilon_1=10^{-15}}$ for~\PS~and~\SG, since they linearly converge to the exact solution, while~${\epsilon_1=10^{-3}}$ for
\SGP,~\SA, and~\SGD, since they linearly converge to an error ball (with a constant stepsize). Recall that one iteration involves one gradient computation in centralized~\SG~and~$n$ (parallel) gradient computations in~\PS. Fig.~\ref{speed} shows that the per node complexity of~\PS~is~$\mc O(n)$ times faster than~\SG;~\PS~thus acts effectively as a means for parallel computation.
\begin{figure}[!h]
\centering
\subfigure{\includegraphics[height=1in]{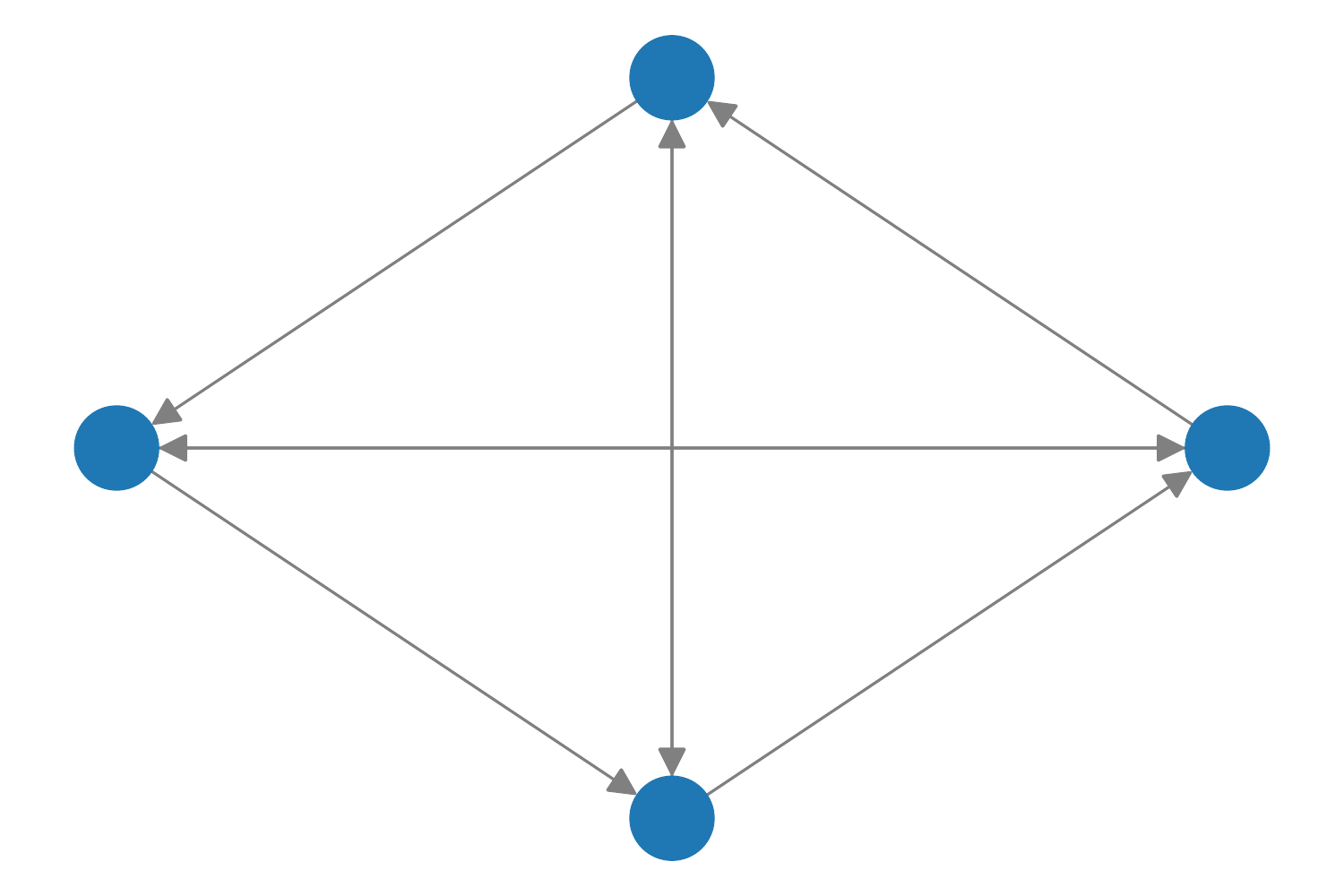}}
\subfigure{\includegraphics[height=1in]{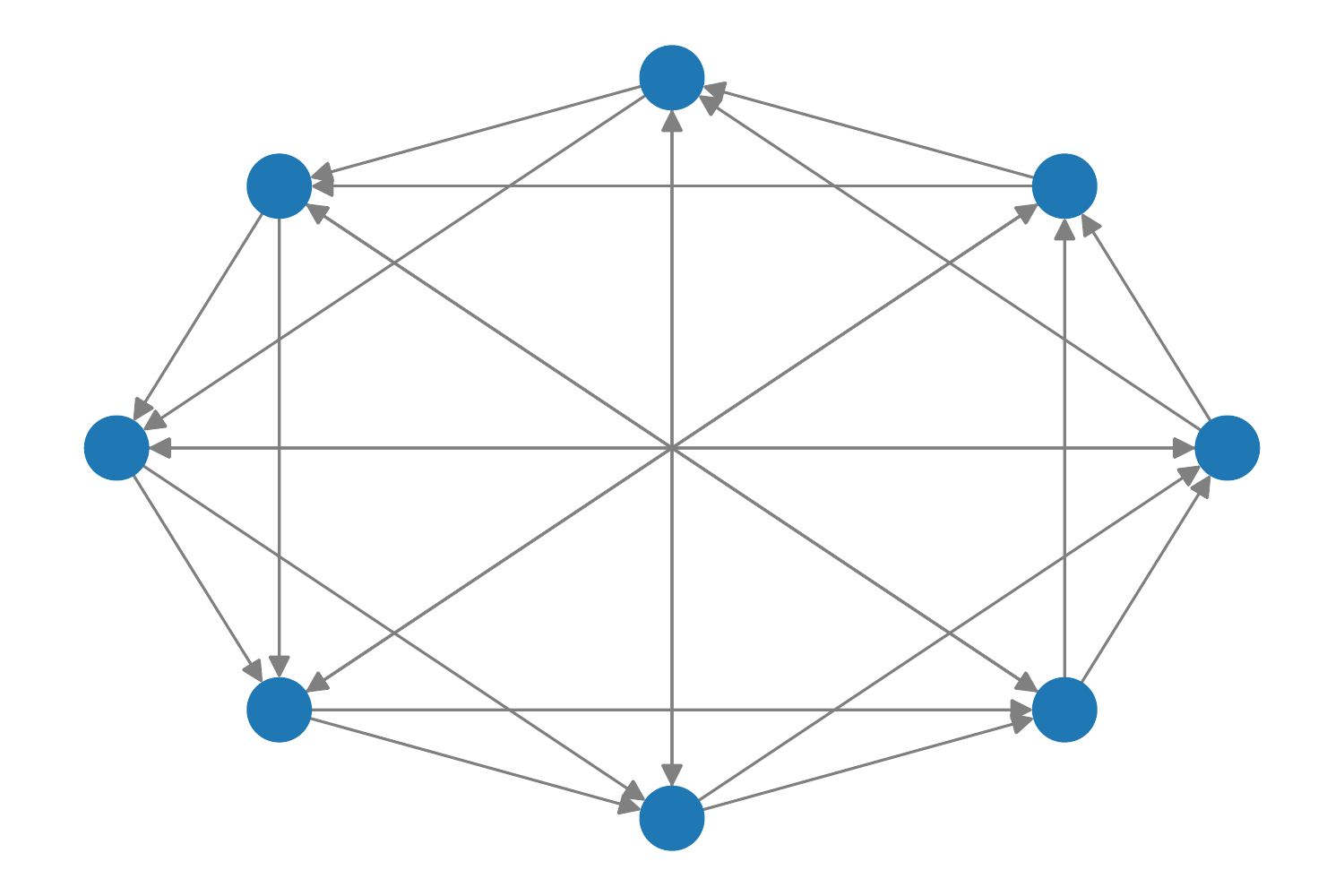}}
\subfigure{\includegraphics[height=1in]{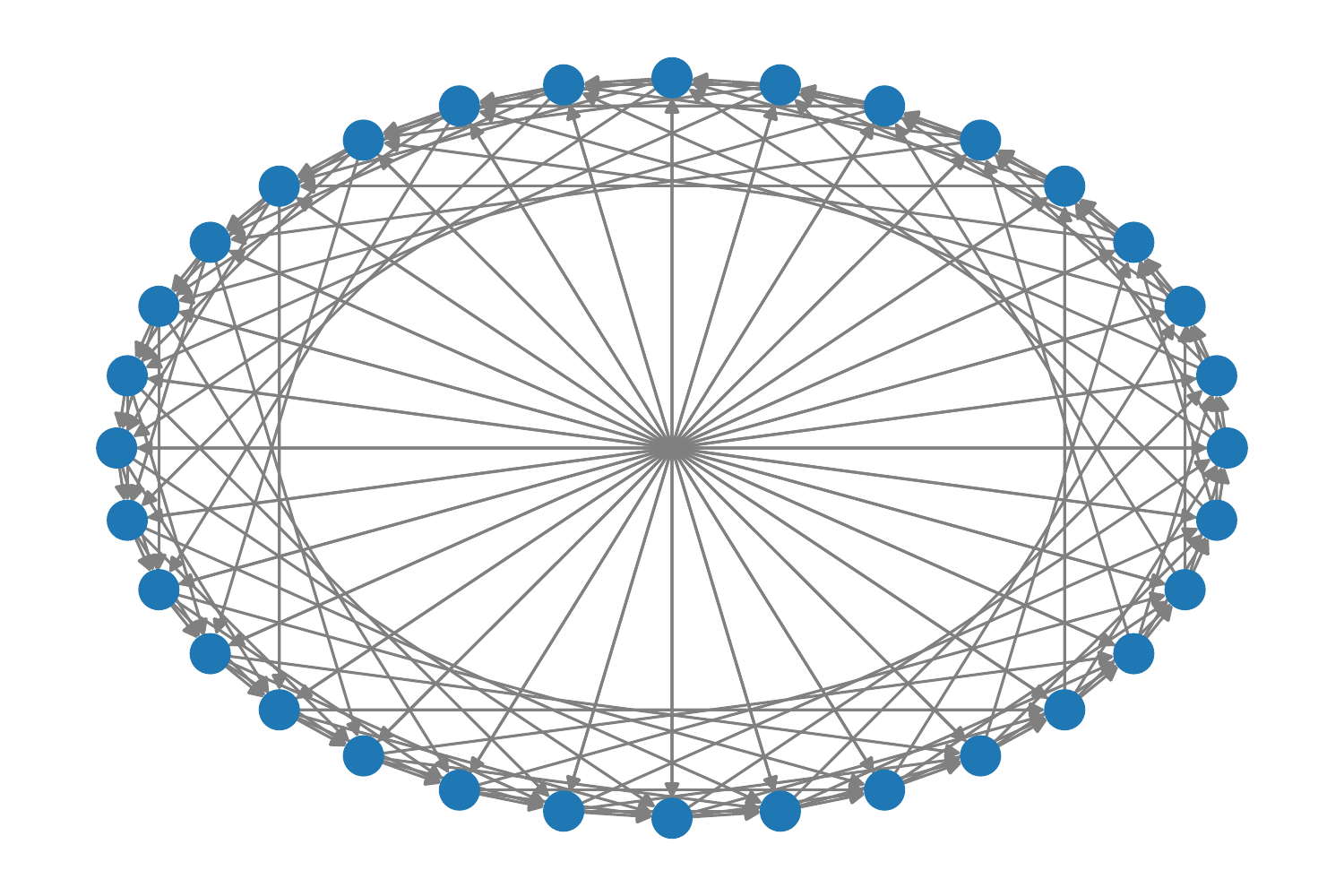}}
\subfigure{\includegraphics[height=1in]{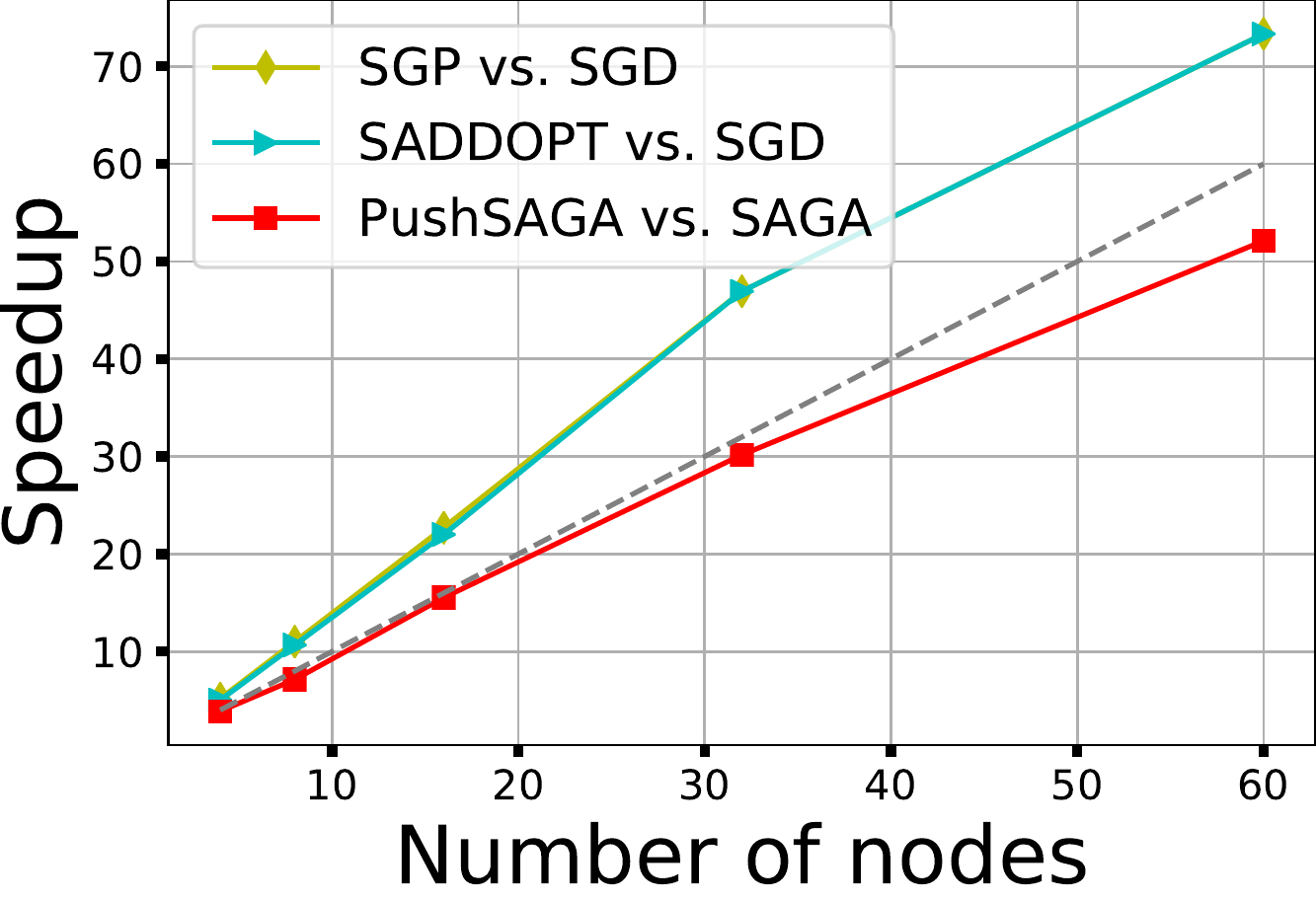}}
\caption{Exponential graphs with~$n=4,8,32$ nodes. The plot shows~\SGP~and~\SA~vs.~\SGD~to achieve an optimality gap of~${10^{-3}}$;~\PS~vs.~\SG~to achieve an optimality gap of~${10^{-15}}$.}
\label{speed}
\end{figure}

\textbf{Network independent convergence: }We now demonstrate the network-independent convergence behavior of~\PS~in the big-data regime, i.e., when~${M=m\gg\kappa^2\psi(1-\lambda)^{-2}}$. For this purpose, we choose the binary classification problem based on the MNIST dataset, with~${N=nm=12,\!000}$ total images equally divided over a network of~$n$ nodes, and keep~${\kappa\approx 1}$ with an appropriate choice of the regularizer. We start with the base directed cycle and generate additional subsequent graphs by adding random directed edges until the graph is almost complete. The family of graphs in an~$n$-node setup thus ranges from least-connectivity (a directed cycle with~${\lambda\ra1}$) to improved connectivity by the addition of edges. For each family of graphs (fixed~$n$), we plot the optimality gap of~\PS~in Fig.~\ref{nic}, for~${n=\{4,8,16,32\}}$ nodes that leads to~${m=\{{3,\!000}, {1,\!500}, 750, 375\}}$ data samples per node. We observe that as long as~$m$ is sufficiently larger than~${\psi(1-\lambda)^{-2}}$, the convergence of~\PS~is almost the same across all topologies generated by keeping a fixed number of nodes~$n$.~\PS~loses network-independence for the~${n=32}$-node network; this is because the big-data regime does not apply as it can be verified that~${m=375}$ and~$ m\approx\psi(1-\lambda)^{-2}$. 
\begin{figure}[!h]
\centering
\subfigure{\includegraphics[width=1.55in]{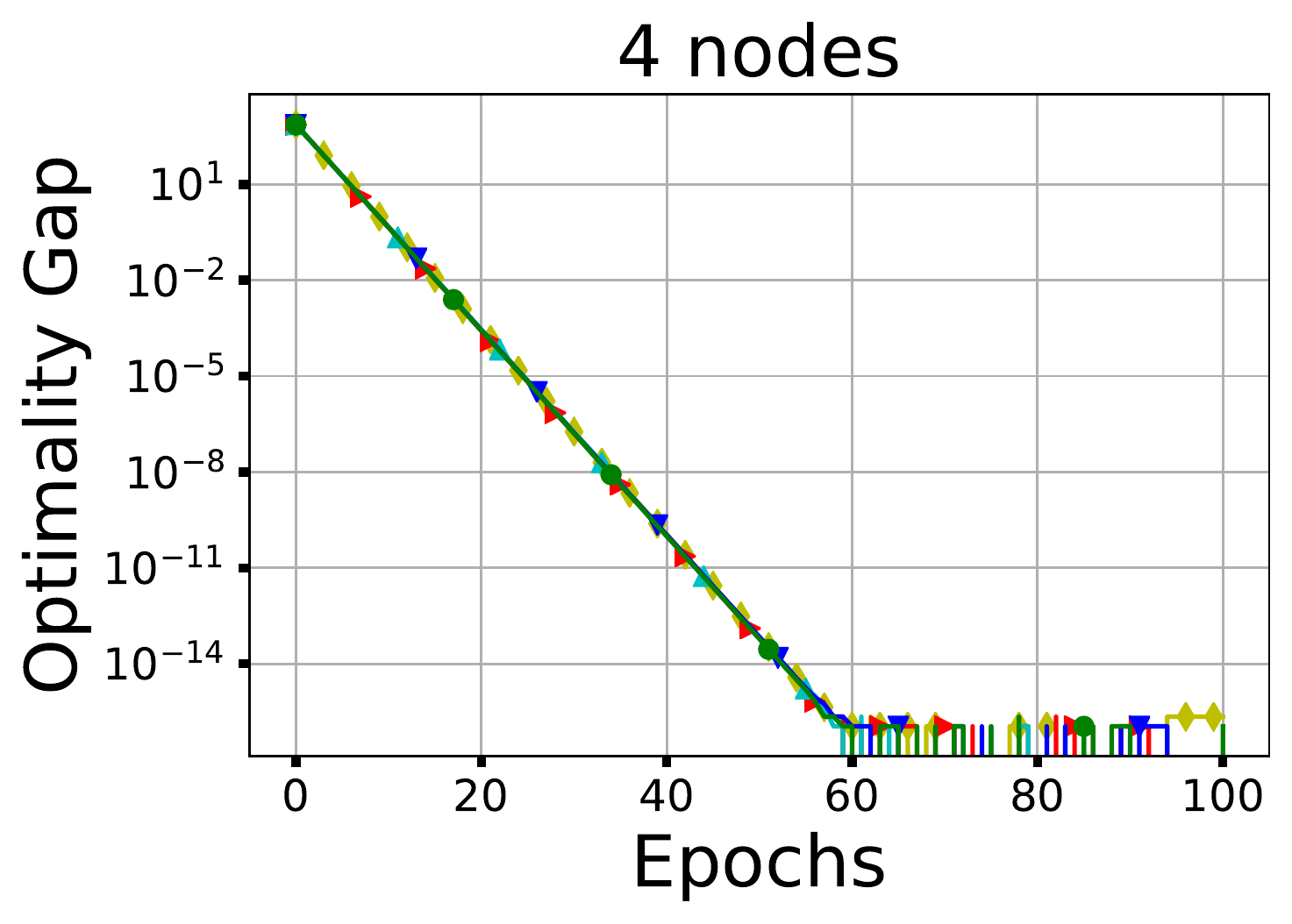}}
\subfigure{\includegraphics[width=1.55in]{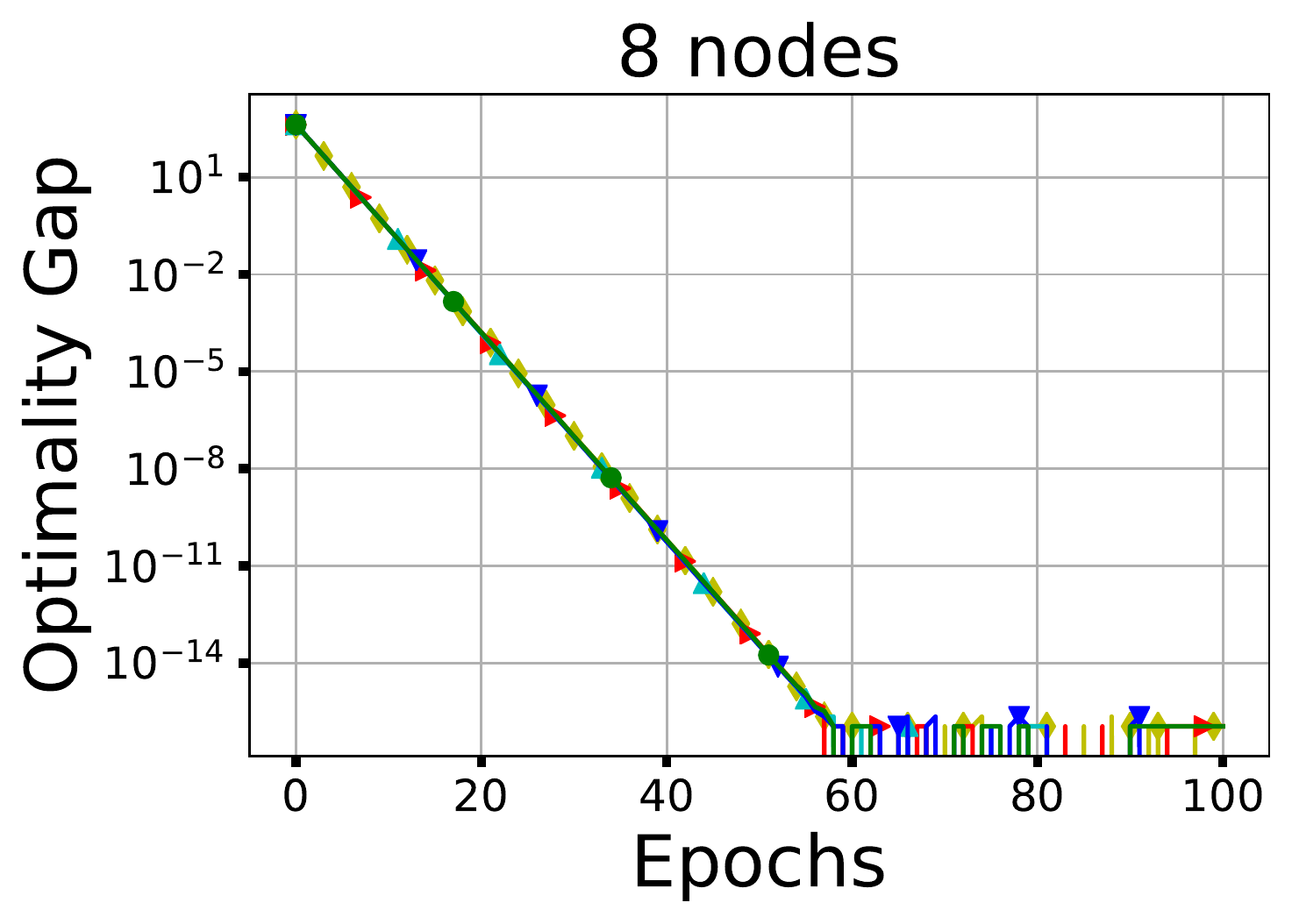}}
\subfigure{\includegraphics[width=1.55in]{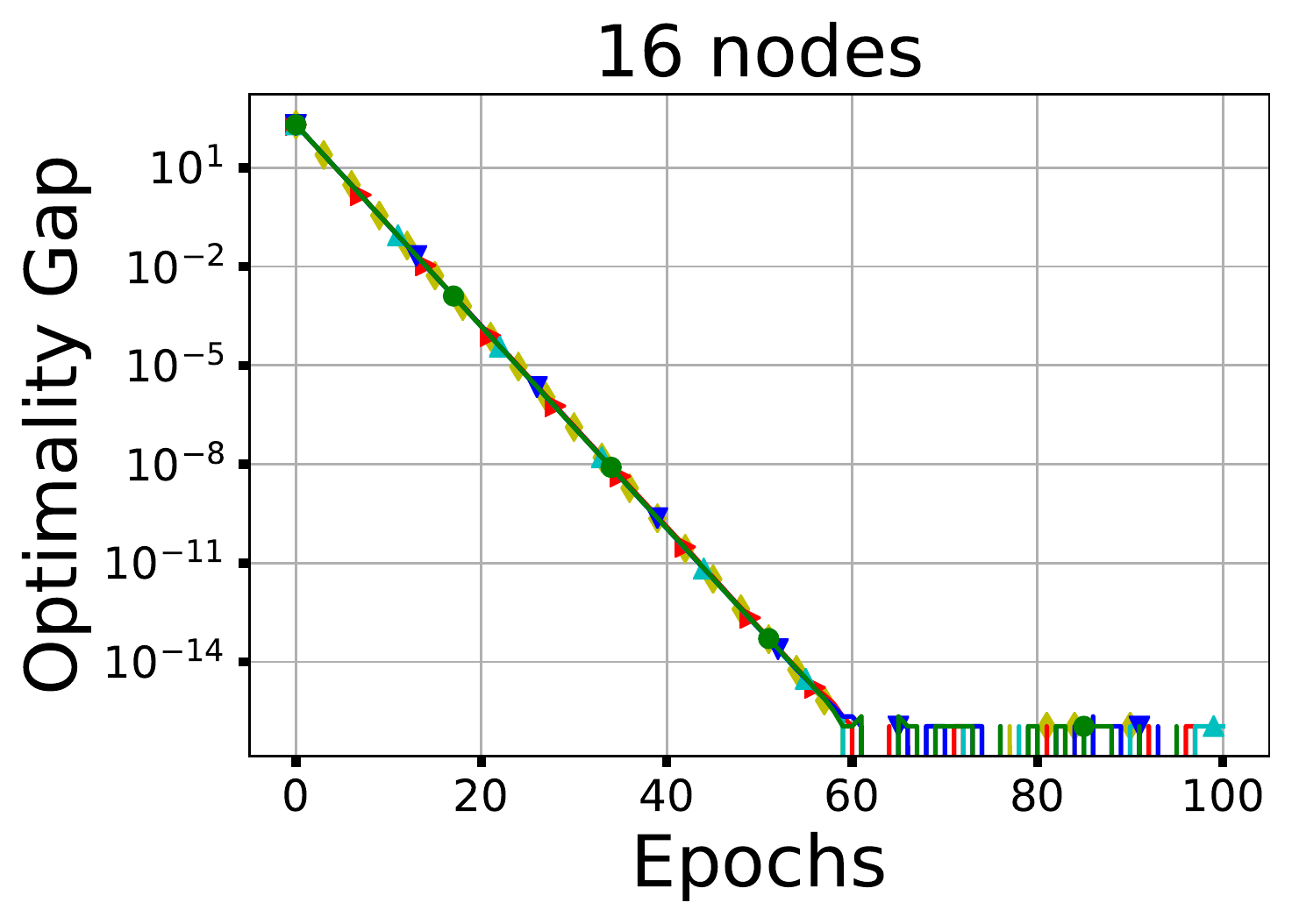}}
\subfigure{\includegraphics[width=1.55in]{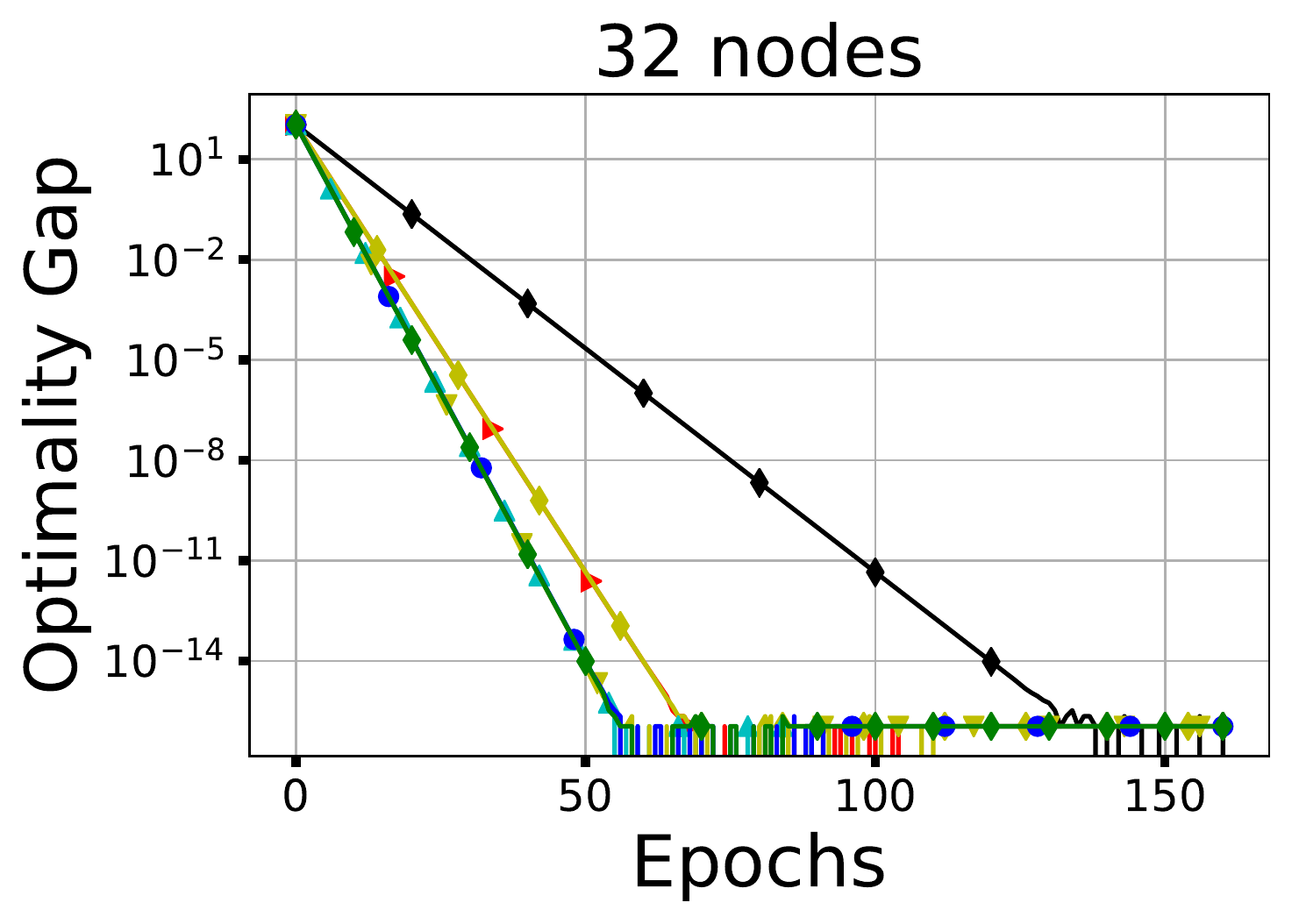}}
\caption{Network-independent convergence rates for~\PS~. Each figure spans different graphs of varying connectivity while keeping the number of nodes~$n$ fixed.}
\label{nic}
\end{figure}

\subsection{Neural Networks (Non-convex)} 
Finally, we compare the stochastic algorithms, i.e.,~\SGP,~\SA, and~\PS,~for training a neural network over directed graphs. For each node, we construct a custom two-layered neural network comprising of one fully-connected, hidden layer of~$64$ neurons. We consider a multi-class classification problem on the MNIST and CIFAR-10 datasets with~$10$ classes each. Both datasets consist of~$60,\!000$ images in total and~$6,\!000$ images per class. The data samples are evenly distributed among the nodes that communicate over the~$500$-node geometric graph of Fig.~\ref{G2}. We show the loss~$F(\ol{\mb{z}}^k)$ and the  test accuracy in Fig.~\ref{sim_NN}. It can be observed that~\PS~shows improved performance compared to other methods particularly over the CIFAR-10 dataset. 
\begin{figure}[!h]
\centering
\subfigure{\includegraphics[width=1.55in]{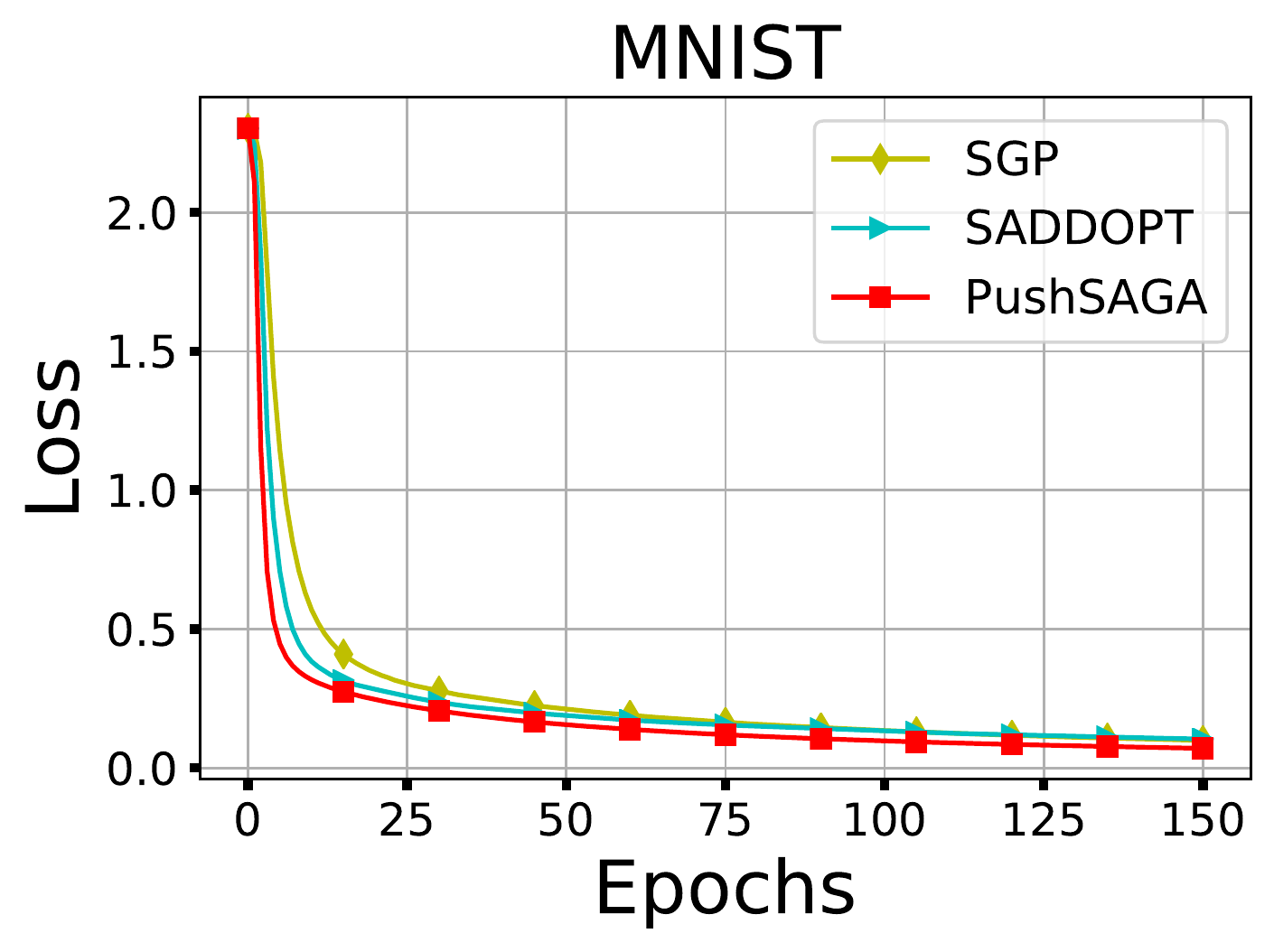}}
\subfigure{\includegraphics[width=1.55in]{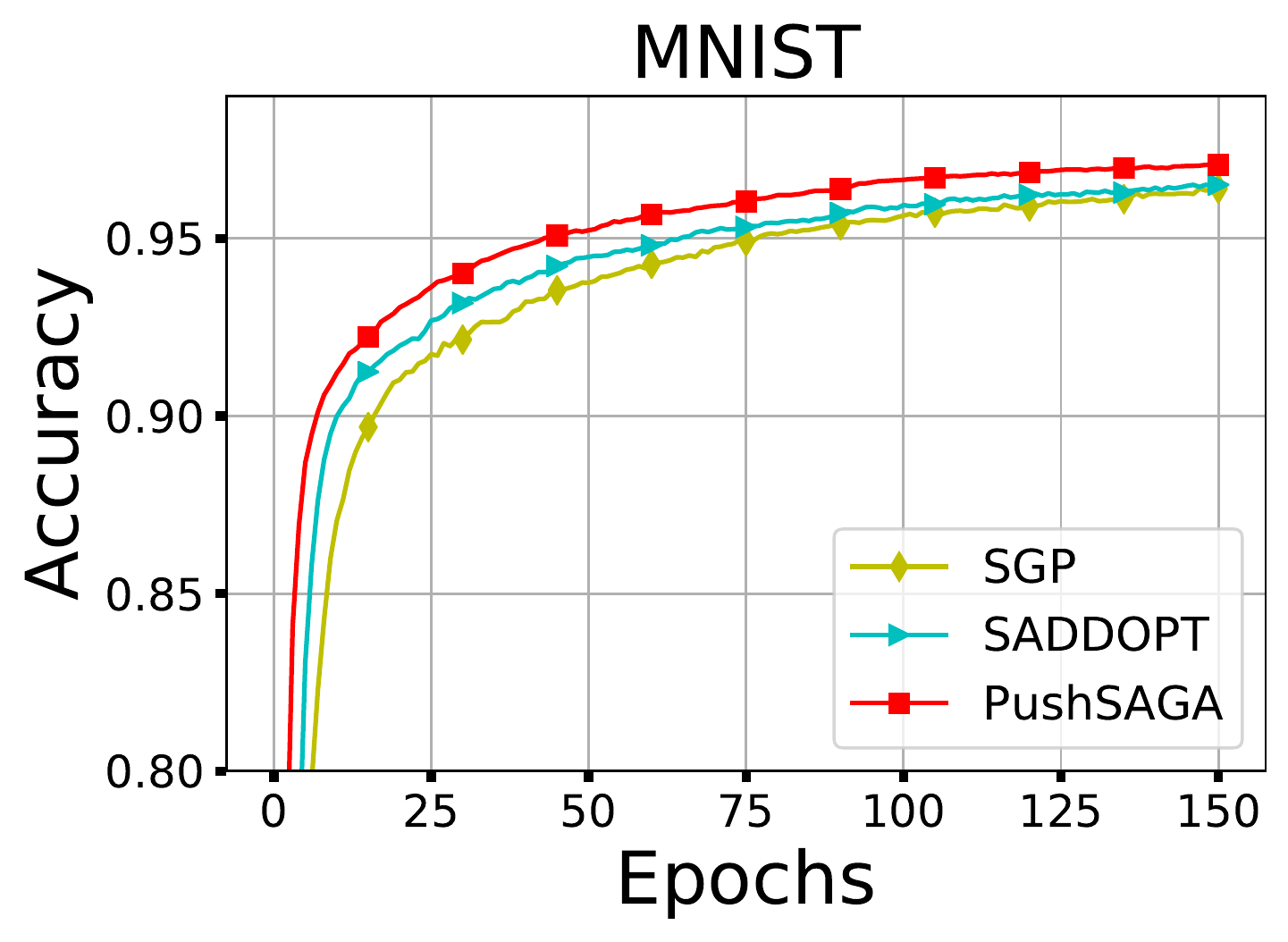}}
\subfigure{\includegraphics[width=1.55in]{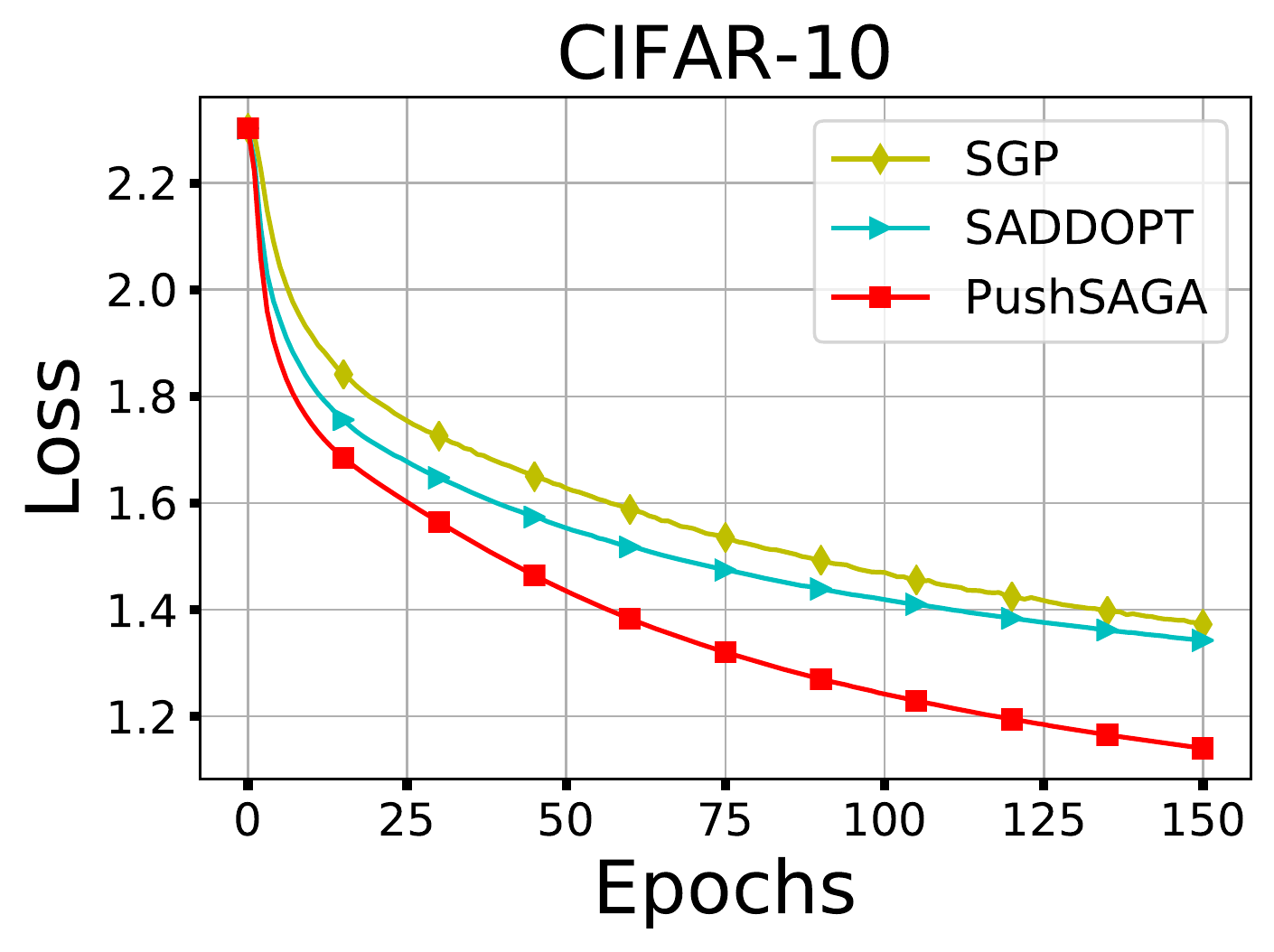}}
\subfigure{\includegraphics[width=1.55in]{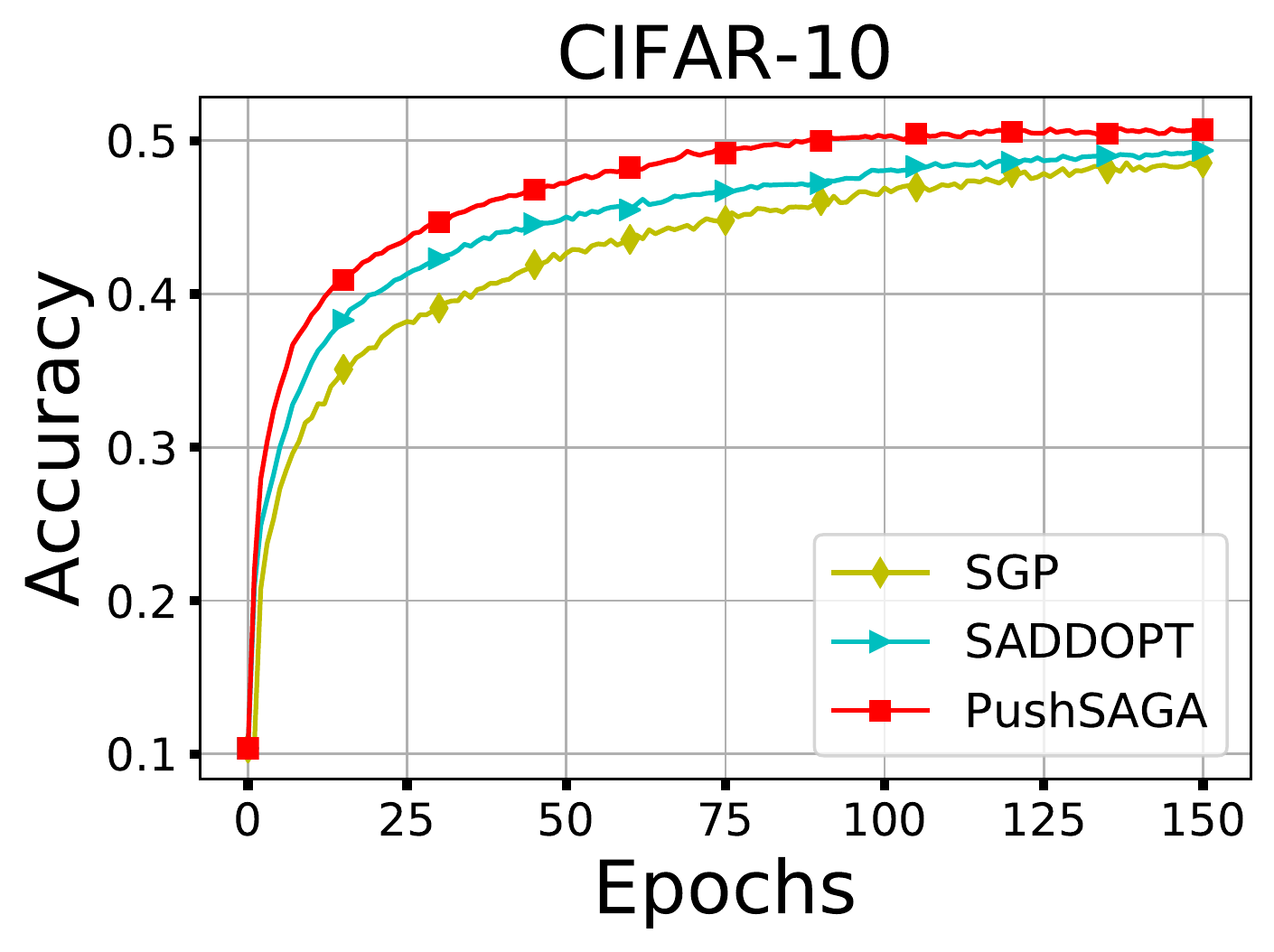}}
\caption{Training a two-layer neural network over a directed geometric graph with~$n=500$~nodes.}
\label{sim_NN}
\end{figure}
\bibliography{KHAN_allPUBS,Qureshi}
\bibliographystyle{unsrt}
\newpage
\appendices
\section{Proof of Lemma 2: LTI inequality of~\PS}
\begin{proof}
We first define some auxiliary variables:
\begin{align*}
     \ol{\mb{w}}^{k} &:= \frac{1}{n} (\mb{1}_n ^\top \otimes I_p) \mb{w}^{k}, \qquad\qquad\qquad\qquad \ol{\mb{h}}^{k} := \frac{1}{n} (\mb{1}_n ^\top \otimes I_p) \nabla \mb{f} (\mb{z}^{k}),\\ \ol{\mb{g}}^{k} &:= \frac{1}{n} (\mb{1}_n ^\top \otimes I_p) \mb{g}^{k}, \qquad\qquad\qquad\qquad \ol{\mb{p}}^{k} := \frac{1}{n} (\mb{1}_n ^\top \otimes I_p) \nabla \mb{f}(\mb 1_n \otimes \ol{\mb{x}}^{k}),\\
     \nabla\mb f(\mb z^k) &:= [\nabla {f}_1(\mb z_1^k)^\top, \cdots, \nabla {f}_n(\mb z_n^k)^\top]^\top,
\end{align*}
where~$\ol{\mb{w}}^{k},\ol{\mb{h}}^{k},\ol{\mb{g}}^{k},$ and~$\ol{\mb{p}}^{k}$, all in~$\mbb R^p$, are averages of the corresponding local vectors at the nodes, and~$\nabla\mb f(\mb z^k)\in\mbb R^{pn}$ stacks the local gradients available at the corresponding local iterates. 

It can be shown that~${\ol{\mb{w}}^{k}=\ol{\mb{g}}^{k}}, {\forall k}$, see e.g.,~\cite{add-opt}. We recall that~\PS~is a stochastic method and the randomness lies in the set of independent random variables~$\{s_i^k\}_{i=\{1, \cdots n\}}^{k\geq1}$. We denote~$\mc{F}^k$~as the history of dynamical system generated by~$\{s_i^a\}_{i=\{1, \cdots n\}}^{a\leq k-1}$. The derivation of~\eqref{sys_eq} is provided in the following four steps:

\paragraph{Step 1 -- Network agreement error:} We note that the first term describing the LTI system is the network agreement error and it can be expanded as,
\begin{align*}
   \|\mb{x}^{k+1} - B^\infty \mb{x}^{k+1} \|^2 _{\bpi} &= \|B \mb{x}^{k} - B^\infty \mb{x}^{k} - \alpha (\mb{w}^{k} - B^\infty \mb{w}^{k}) \|^2 _{\bpi}\\
   &\leq \left(1 + r \right) \lambda^2 \|\mb{x}^{k} - B^\infty \mb{x}^{k}\|^2 _{\bpi} + \left(1 + r^{-1} \right)\alpha^2  \| \mb{w}^{k} - B^\infty \mb{w}^{k} \|^2 _{\bpi},
\end{align*}
where we use that~${\|B \mb x^k - B^\infty \mb x^k \|_{\bpi} \leq \mn{B - B^\infty}_{\bpi}\|\mb x^k - B^\infty \mb x^k \|_{\bpi} = \lambda \|\mb x^k - B^\infty \mb x^k \|_{\bpi}}$, from Assumption~1,  and the Young's Inequality, i.e.,~${\forall \mb a, \mb b \in \mbb R^{pn}}$, and for~${r>0}$, 
\[
{\|\mb a + \mb b \|^2 \leq (1 + r)\|\mb a \|^2 + (1 + r^{-1})\|\mb b \|^2}.
\] 
Setting~${r = \frac{1-\lambda^2}{2 \lambda^2}}$~and~${r = 1}$~in the above inequality and taking full expectation leads to
\begin{align}\label{agg1}
   \mbb{E}\|\mb{x}^{k+1} - B^\infty \mb{x}^{k+1} \|^2 _{\bpi}
   &\leq \left(\frac{1+\lambda^2}{2}\right) \mbb{E} \|\mb{x}^{k} - B^\infty \mb{x}^{k} \|^2 _{\bpi} + \left(\frac{2 \alpha^2}{1-\lambda^2} \right) \mbb{E}\| \mb{w}^{k} -B^\infty \mb{w}^{k} \|^2 _{\bpi}, \\\label{agg2}
   \mbb{E}\|\mb{x}^{k+1} - B^\infty \mb{x}^{k+1} \|^2 _{\bpi}
   &\leq 2 \mbb{E}\|\mb{x}^{k} - B^\infty \mb{x}^{k} \|^2 _{\bpi} + 2 \alpha^2 \mbb{E}\| \mb{w}^{k} -B^\infty \mb{w}^{k} \|^2 _{\bpi},
\end{align}
where~\eqref{agg1} is used in the LTI system~\eqref{sys_eq}, and~\eqref{agg2} is helpful for the analysis. 
\paragraph{Step 2 -- Mean gap of the auxiliary variables:} We expand as follows:~${\forall k \geq 1}$ and ${\forall i}$,
\begin{align} \nonumber
\mbb{E}[\mb{t}^{k+1}_{i} | \mc{F}^{k}] &\leq \frac{1}{m_i} \sum_{j=1}^{m_i} \mbb{E} \left[ \left\| \mb{v}^{k+1}_{i,j} - \mb{z}^* \right\|_2^2 \Big| \mc{F}^{k} \right] \\ \nonumber
&= \frac{1}{m_i} \sum_{j=1}^{m_i} \left[ \left( 1 - \frac{1}{m_i} \right) \left\| \mb{v}^{k}_{i,j} - \mb{z}^* \right\|_2^2 + \frac{1}{m_i} \left\| \mb{z}^{k}_{i} - \mb{z}^* \right\|_2^2 \right] \\ \nonumber
&= \left( 1 - \frac{1}{m_i} \right) \mb{t}^{k}_{i} + \frac{1}{m_i} \left\| \mb{z}^{k}_{i} - \mb{z}^* \right\|_2^2 \\\nonumber
&\leq \left( 1 - \frac{1}{M} \right) \mb{t}^{k}_{i} + \frac{2}{m} \left\| \mb{z}^{k}_{i} - \ol{\mb{x}}^{k} \right\|_2^2 + \frac{2}{m} \left\|\ol{\mb{x}}^{k} - \mb{z}^* \right\|_2^2,
\end{align}
where we use that~${\mb{v}_{i,j}^{k+1} = \mb{z}_i^{k}}$, with probability $\frac{1}{m_i}$, and~${\mb{v}_{i,j}^{k+1} = \mb{v}_i^{k}}$, with probability~${1-\frac{1}{m_i}}$, given~$\mc{F}^k$. We thus have 
\begin{align}\label{tk}
\mbb{E}[\mb{t}^{k+1} | \mc{F}^{k}] &\leq \left( 1 - \frac{1}{M} \right) \mb{t}^{k} + \frac{2}{m} \left\| \mb{z}^{k} - (\mb{1}_n \otimes \ol{\mb{x}}^{k}) \right\|_2^2 + \frac{2 n}{m} \left\|\ol{\mb{x}}^{k} - \mb{z}^* \right\|_2^2.
\end{align}

We consider the expansion of ${\|\mb{z}^{k} - (\mb{1}_n \otimes \ol{\mb{x}}^{k}) \|_2^2}$ next.
\begin{align} \label{ztox1}
\|\mb{z}^{k} - (\mb{1}_n \otimes \ol{\mb{x}}^{k}) \|_2^2
&= \|Y_k^{-1}[\mb{x}^{k} - Y^\infty (\mb{1}_n \otimes \ol{\mb{x}}^{k})] + [Y_k^{-1} Y^\infty - I_{n p}] (\mb{1}_n \otimes \ol{\mb{x}}^{k})\|_2^2\nonumber\\
&\leq y_-^2 \|\mb{x}^{k} - B^\infty \mb{x}^{k}\|_2^2 + (y_- T \lambda^k)^2 \|\mb{x}^{k}\|_2^2 + 2 y_- (y_- T \lambda^k) \|\mb{x}^{k} - B^\infty \mb{x}^{k}\|_2 \|\mb{x}^{k}\|_2 \nonumber\\
&\leq y_-^2 \|\mb{x}^{k} - B^\infty \mb{x}^{k}\|_2^2 + (y_- T \lambda^k)^2 \|\mb{x}^{k}\|_2^2 + y_-^2 T \lambda^k \left( \|\mb{x}^{k} - B^\infty \mb{x}^{k}\|_2^2 + \|\mb{x}^{k}\|_2^2 \right) \nonumber\\ 
&\leq (y_-^2 + y_-^2 T \lambda^k) \ol{\pi} \|\mb{x}^{k} - B^\infty \mb{x}^{k}\|_{\bpi}^2 + \left( y_- ^2 T^2 \lambda^{k} + y_-^2 T \lambda^k \right) \|\mb{x}^{k}\|_2^2,
\end{align}
where we used the Young's inequality and Lemma~\ref{y_lem}. Let~${d_1 := y_-^2 (1+T)}$ and ${d_2 := y_-^2 T(1 + T)}$; using~\eqref{ztox1} in~\eqref{tk} and taking full expectation we get the final expression for $\mbb{E}[\mb{t}^{k+1}]$,
\begin{align} \label{tk2}
\mbb{E}[\mb{t}^{k+1}] 
&\leq \frac{2}{m} d_1 \ol{\pi} \mbb{E} \|\mb{x}^{k} - B^\infty \mb{x}^{k}\|_{\bpi}^2 + \frac{2n}{m} \mbb{E} \|\ol{\mb{x}}^{k} - \mb{z}^*\|_2^2 + \left( 1 - \frac{1}{M} \right) \mb{t}^{k} + \frac{2}{m} d_2 \lambda^k \mbb{E} \|\mb{x}^{k}\|_2^2.
\end{align}
We define~$\psi := y y_-^2 (1+T)h$ as a directivity constant that quantifies the directed nature. In other words,~${\psi=1}$ for undirected graphs since~${y=y_-=1}, {T=0},$ and~${h=1}$ for doubly-stochastic weight matrices. Note that~$d_1 \leq \psi$~and~$d_2 \leq \psi T$. Hence, we can write
\begin{align} \label{tk3}
\mbb{E}[\mb{t}^{k+1}] 
&\leq \frac{2 \psi \ol{\pi}}{m} \mbb{E} \|\mb{x}^{k} - B^\infty \mb{x}^{k}\|_{\bpi}^2 + \frac{2n}{m} \mbb{E} \|\ol{\mb{x}}^{k} - \mb{z}^*\|_2^2 + \left( 1 - \frac{1}{M} \right) \mb{t}^{k} + \frac{2 \psi}{m} T\lambda^k \mbb{E} \|\mb{x}^{k}\|_2^2,
\end{align}
which is used to obtain the LTI system in~\eqref{sys_eq}.
\paragraph{Step 3 -- Optimality gap:} 
We now consider~$\mbb{E}\|\ol{\mb{x}}^{k+1} - \mb{z}^* \|_2^2$. It can be verified that
\[
{\mbb{E}[\|\ol{\mb{g}}^{k}\|^2_2 | \mc{F}^{k}] = \mbb{E}[\|\ol{\mb{g}}^{k} - \ol{\mb{h}}^{k}\|_2^2 | \mc{F}^{k}] + \|\ol{\mb{h}}^{k}\|_2^2},
\]
we thus have
\begin{align} \label{opt_gap1}
\mbb{E}[\|\ol{\mb{x}}^{k+1} - \mb{z}^* \|_2 ^2 | \mc{F}^{k}] &= \mbb{E}[\|\ol{\mb{x}}^{k} - \alpha \ol{\mb{w}}^k - \mb{z}^* \|_2 ^2 | \mc{F}^{k}] \nonumber\\
&\leq \|\ol{\mb{x}}^{k}- \mb{z}^*\|_2^2 + \alpha^2 \left( \mbb{E}[\|\ol{\mb{g}}^{k} - \ol{\mb{h}}^{k}\|_2^2 | \mc{F}^{k}] + \|\ol{\mb{h}}^{k} \|_2^2 \right) - 2 \alpha \langle \ol{\mb{x}}^{k}- \mb{z}^*, \ol{\mb{h}}^{k} \rangle \nonumber\\
&= \|\ol{\mb{x}}^{k}- \mb{z}^*\|_2^2 - 2 \alpha \langle \ol{\mb{x}}^{k}- \mb{z}^*, \ol{\mb{h}}^{k} \rangle + \alpha^2 \|\ol{\mb{h}}^{k} \|_2^2 + \alpha^2 \mbb{E}[\|\ol{\mb{g}}^{k} - \ol{\mb{h}}^{k}\|_2^2 | \mc{F}^{k}] \nonumber\\
&= \|\ol{\mb{x}}^{k}- \mb{z}^*\|_2^2 - 2 \alpha \langle \ol{\mb{x}}^{k}- \mb{z}^*, \ol{\mb{p}}^{k} \rangle + 2 \alpha \langle \ol{\mb{x}}^{k}- \mb{z}^*, \ol{\mb{p}}^{k} - \ol{\mb{h}}^{k} \rangle \nonumber\\
&~~~+ \alpha^2 \|\ol{\mb{p}}^{k} - \ol{\mb{h}}^{k}\|_2^2 + \alpha^2 \|\ol{\mb{p}}^{k} \|_2^2 - 2 \alpha^2 \langle \ol{\mb{p}}^{k}, \ol{\mb{p}}^{k} - \ol{\mb{h}}^{k} \rangle + \alpha^2 \mbb{E}[\|\ol{\mb{g}}^{k} - \ol{\mb{h}}^{k}\|_2^2 | \mc{F}^{k}] \nonumber\\
&= \|\ol{\mb{x}}^{k} - \alpha \ol{\mb{p}}^{k} - \mb{z}^*\|_2^2 + \alpha^2 \|\ol{\mb{p}}^{k} - \ol{\mb{h}}^{k}\|_2^2 + 2 \alpha \langle \ol{\mb{x}}^{k} - \alpha \ol{\mb{p}}^{k} - \mb{z}^*, \ol{\mb{p}}^{k} - \ol{\mb{h}}^{k} \rangle \nonumber\\
&~~~+ \alpha^2 \mbb{E}[\|\ol{\mb{g}}^{k} - \ol{\mb{h}}^{k}\|_2^2 | \mc{F}^{k}] \nonumber\\
&\leq (1-\alpha \mu)^2 \|\ol{\mb{x}}^{k} - \mb{z}^*\|_2^2 + \alpha^2 \|\ol{\mb{p}}^{k} - \ol{\mb{h}}^{k}\|_2^2 \nonumber\\
&~~~+ 2 \alpha (1-\alpha \mu)  \|\ol{\mb{x}}^{k} - \mb{z}^*\| \|\ol{\mb{p}}^{k} - \ol{\mb{h}}^{k}\| + \alpha^2 \mbb{E}[\|\ol{\mb{g}}^{k} - \ol{\mb{h}}^{k}\|_2^2 | \mc{F}^{k}] \nonumber\\
&\leq (1-\alpha \mu)^2 \|\ol{\mb{x}}^{k} - \mb{z}^*\|_2^2 + \alpha^2 \|\ol{\mb{p}}^{k} - \ol{\mb{h}}^{k}\|_2^2 \nonumber\\
&~~~+ \alpha (1-\alpha \mu) \left(\mu \|\ol{\mb{x}}^{k} - \mb{z}^*\|^2 +\frac{1}{\mu}\|\ol{\mb{p}}^{k} - \ol{\mb{h}}^{k}\|^2\right) + \alpha^2 \mbb{E}[\|\ol{\mb{g}}^{k} - \ol{\mb{h}}^{k}\|_2^2 | \mc{F}^{k}] \nonumber\\
&\leq (1 - \alpha \mu) \|\ol{\mb{x}}^{k} - \mb{z}^*\|_2^2 + \left( \frac{\alpha L^2}{n \mu}\right) \|\mb{z}^{k} - (\mb{1}_n \otimes \ol{\mb{x}}^{k})\|_2^2 + \alpha^2 \mbb{E}[\|\ol{\mb{g}}^{k} - \ol{\mb{h}}^{k}\|_2^2 | \mc{F}^{k}],
\end{align}
where we use~${\|\ol{\mb x}^k - \alpha \ol{\mb p}^k - \mb z^*\| \leq (1- \alpha \mu) \|\ol{\mb x}^k -  \mb z^*\|}$, if~${0< \alpha \leq \frac{1}{L}}$ and~$L$-smoothness of $f_i$'s, from Assumption 2. The expansion of~$\mbb{E}[\|\ol{\mb{g}}^{k} - \ol{\mb{h}}^{k}\|_2^2 | \mc{F}^{k}]$~follows:
\begin{align} \label{gh}
    \mbb{E}[\|\ol{\mb{g}}^{k} - \ol{\mb{h}}^{k}\|_2^2 | \mc{F}^{k}] 
    = \tfrac{1}{n^2} \mbb{E} [ \textstyle\sum_{i=1}^{n}  \|\mb{g}^{k}_{i}  - \nabla f_i(\mb{z}^{k}_{i}) \|_2^2 | \mc{F}^{k} ]= \frac{1}{n^2} \mbb{E} \left[ \|\mb{g}^{k}  - \nabla \mb f(\mb{z}^{k}) \|_2^2 | \mc{F}^{k} \right],
\end{align}
where we use~${\mbb{E}[\textstyle\sum_{i \neq j}  \langle \mb{g}^{k}_{i}  - \nabla f_i(\mb{z}^{k}_{i}), \mb{g}^{k}_j  - \nabla f_j(\mb{z}^{k}_j) \rangle | \mc{F}^{k}] = 0},$ since~$\mb g_k^i$'s are independent across the nodes. 

Next we define~${\ol{\nabla}_i^k:=\frac{1}{m_i}\sum_{j=1}^{m_i} \nabla f_{i,j}(\mb{v}^{k}_{i,j})}$~and obtain a bound on~$\mbb{E} \left[ \|\mb{g}^{k}  - \nabla \mb f(\mb{z}^{k}) \|_2^2 | \mc{F}^{k} \right]$ and start with local quantity.
\begin{align*}
     \mbb{E}\left[ \left \|\mb{g}^{k}_{i}  - \nabla f_i(\mb{z}^{k}_{i}) \right\|_2^2 \Big| \mc{F}^{k} \right] &= \mbb{E} \left[ \left \|\nabla f_{i,s_i^k}(\mb{z}^{k}_{i}) - \nabla f_{i,s_i^k}(\mb{v}^{k}_{i,j}) + \ol{\nabla}_i^k  - \nabla f_i(\mb{z}^{k}_{i}) \right\|_2^2 \Big| \mc{F}^{k} \right] \\
    &= \mbb{E} \Big[ \Big\| \underbrace{\nabla f_{i,s_i^k}(\mb{z}^{k}_{i}) - \nabla f_{i,s_i^k}(\mb{z}^*)}_{X_i^k} - (\underbrace{\nabla f_i(\mb{z}^{k}_{i}) - \nabla f_{i}(\mb{z}^*)}_{\mbb E [X_i^k]}) \\
    &~~~- \big(~ \underbrace{\nabla f_{i,s_i^k}(\mb{v}^{k}_{i,s^k}) - \nabla f_{i,s_i^k}(\mb{z}^*)}_{Z_i^k} - (~\underbrace{\ol{\nabla}_i^k - \nabla f_{i}(\mb{z}^*)}_{\mbb E [Z_i^k]}~)~\big) \Big\|_2^2  \Big| \mc{F}^{k} \Big]\\  &\leq 2 \mbb{E} \Big[ \Big\|\nabla f_{i,s_i^k}(\mb{z}^{k}_{i}) - \nabla f_{i,s_i^k}(\mb{z}^*) \Big \|_2^2 \Big| \mc{F}^{k} \Big] - 2 \|\nabla f_i(\mb{z}^{k}_{i}) - \nabla f_{i}(\mb{z}^*)\|_2^2  \\
    &~~~+ 2 \mbb{E} \Big[ \Big\| \nabla f_{i,s_i^k}(\mb{v}^{k}_{i,s_i^k}) - \nabla f_{i,s_i^k}(\mb{z}^*)  \Big \|_2^2 \Big| \mc{F}^{k} \Big] - 2 \| \ol{\nabla}_i^k - \nabla f_{i}(\mb{z}^*)  \|_2^2,
\end{align*}
where we use variance decomposition and the Young's Inequality. We drop the negative terms and further proceed as follows: 
\begin{align*}
     \mbb{E}\left[ \left \|\mb{g}^{k}_{i}  - \nabla f_i(\mb{z}^{k}_{i}) \right\|_2^2 \Big| \mc{F}^{k} \right] &\leq \frac{2}{m_i} \sum_{j=1}^{m_i} \Big( \big\|\nabla f_{i,j}(\mb{z}^{k}_{i}) - \nabla f_{i,j}(\mb{z}^*) \big \|_2^2 + \big\| \nabla f_{i,j}(\mb{v}^{k}_{i,j}) - \nabla f_{i,j}(\mb{z}^*)  \big \|_2^2 \Big) \\
    &\leq \frac{2 L^2}{m_i} \sum_{j=1}^{m_i}\left(2 \|\mb{z}^{k}_{i} - \ol{\mb{x}}^{k} \|_2^2 + 2 \|\ol{\mb{x}}^{k} - \mb{z}^* \|_2^2 \right) + \frac{2 L^2}{m_i} \sum_{j=1}^{m_i} \| \mb{v}^{k}_{i,j} - \mb{z}^* \|_2^2 \\
    &= \frac{4 L^2}{m_i} \sum_{j=1}^{m_i} \left( \|\mb{z}^{k}_{i} - \ol{\mb{x}}^{k} \|_2^2 + \|\ol{\mb{x}}^{k} - \mb{z}^* \|_2^2 \right) + 2 L^2 \mb{t}^{k}_{i},
\end{align*}
with the help of the $L$-smoothness of~$f_{i,j}$'s by Assumption 2, which leads to \begin{align}\label{gk_fzk}
    \mbb{E} \left[ \left \|\mb{g}^{k}  - \nabla \mb f(\mb{z}^{k}) \right\|_2^2 \Big| \mc{F}^{k} \right] &\leq 4 L^2 \|\mb{z}^{k} - (\mb{1}_n \otimes \ol{\mb{x}}^{k}) \|_2^2 + 4 n L^2 \|\ol{\mb{x}}^{k} - \mb{z}^* \|_2^2  + 2 L^2 \mb{t}^{k}.
\end{align}
Using the above in~\eqref{gh} and, we simplify~\eqref{opt_gap1} as
\begin{align*}
\mbb{E}[\|\overline{\mb{x}}^{k+1} - \mb{z}^* \|_2 ^2 | \mc{F}^{k}] &\leq (1 - \alpha \mu) \|\overline{\mb{x}}^{k} - \mb{z}^*\|_2^2 + \left( \frac{\alpha L^2}{n \mu}\right) \| \mb{z}^{k} - (\mb{1}_n \otimes \ol{\mb{x}}^{k})\|_2^2 \\
&~~~+ \alpha^2 \left( \frac{4 L^2}{n^2} \|\mb{z}^{k} - (\mb{1}_n \otimes \ol{\mb{x}}^{k}) \|_2^2 + \frac{4 L^2}{n} \|\ol{\mb{x}}^{k} - \mb{z}^* \|_2^2  + \frac{2 L^2}{n^2} \mb{t}^{k} \right) \\
&= \left(1 - \alpha \mu + \alpha^2 \frac{4 L^2}{n}\right) \|\overline{\mb{x}}^{k} - \mb{z}^*\|_2^2 + \alpha^2 \frac{2 L^2}{n} \mb{t}^{k}\\
&~~~+ \left( \frac{\alpha L^2}{n}\right) \left(\frac{1}{\mu} + \frac{4\alpha}{n} \right)  \|\mb{z}^{k} - (\mb{1}_n \otimes \ol{\mb{x}}^{k})\|_2^2 \\
&\leq \left(1 - \frac{\alpha \mu}{2} \right) \|\overline{\mb{x}}^{k} - \mb{z}^*\|_2^2 + \left( \frac{2 L^2 \alpha}{\mu n} \right)  \|\mb{z}^{k} - (\mb{1}_n \otimes \ol{\mb{x}}^{k})\|_2^2 + \alpha^2 \frac{2 L^2}{n^2} \mb{t}^{k}. 
\end{align*}
We note that the last step follows when we bound~${0< \alpha \leq \frac{n \mu}{8 L^2}}$~for the first term and~${0<\alpha \leq \frac{n}{4 \mu}}$~for the second term. We plug the expression for $\|\mb{z}^{k} - (\mb{1}_n \otimes \ol{\mb{x}}^{k}) \|_2^2$ from~\eqref{ztox1}~in the above equation and take full expectation for the final expression of optimality gap.
\begin{align*}
\mbb{E}[\|\overline{\mb{x}}^{k+1} - \mb{z}^* \|_2 ^2] &\leq \left( \frac{2 L^2 \alpha}{\mu n} \right)  d_1 \ol{\pi} \mbb{E}[\|\mb{x}^{k} - B^\infty \mb{x}^{k} \|^2 _{\bpi}] + \left(1 - \frac{\alpha \mu}{2} \right) \mbb{E}[\|\overline{\mb{x}}^{k} - \mb{z}^*\|_2^2] \\
&~~~ + \left( \frac{2 \alpha^2 L^2}{n^2} \right) \mbb{E}[\mb{t}^{k}] + \left( \frac{2 L^2 \alpha}{\mu n} \right) d_2 \lambda^k \mbb{E}[\|\mb{x}^{k} \|^2 _2]
\end{align*}
By using~${d_1 \leq \psi}$ and~${d_2 \leq \psi T}$, we further obtain
\begin{align*}
\mbb{E}[\|\overline{\mb{x}}^{k+1} - \mb{z}^* \|_2 ^2] &\leq \left( \frac{2  \alpha L^2 \psi \ol{\pi}}{n \mu} \right) \mbb{E}[\|\mb{x}^{k} - B^\infty \mb{x}^{k} \|^2 _{\bpi}] + \left(1 - \frac{\alpha \mu}{2} \right) \mbb{E}[\|\overline{\mb{x}}^{k} - \mb{z}^*\|_2^2] \\
&~~~+ \left(\frac{2 \alpha^2 L^2}{n^2}\right) \mbb{E}[\mb{t}^{k}] + \left( \frac{2 \alpha L^2 \psi}{n \mu} \right) T \lambda^k \mbb{E}[\|\mb{x}^{k} \|^2 _2],
\end{align*}
which is used to obtain the LTI system in~\eqref{sys_eq}.

Additionally, the following bound on the optimality gap is used in the final step of the derivation. 
\begin{align} \label{opt_gap2}
\mbb{E}[\|\ol{\mb{x}}^{k+1} - \mb{z}^* \|_2 ^2 | \mc{F}^{k}] &\leq (1-\alpha \mu)^2 \|\ol{\mb{x}}^{k} - \mb{z}^*\|_2^2 + \alpha^2 \|\ol{\mb{p}}^{k} - \ol{\mb{h}}^{k}\|_2^2 + \alpha^2 \mbb{E}[\|\ol{\mb{g}}^{k} - \ol{\mb{h}}^{k}\|_2^2 | \mc{F}^{k}] \nonumber\\
&~~~+ 2 \alpha (1-\alpha \mu)  \|\ol{\mb{x}}^{k} - \mb{z}^*\| \|\ol{\mb{p}}^{k} - \ol{\mb{h}}^{k}\| \nonumber\\
&\leq (1-\alpha \mu)^2 \|\overline{\mb{x}}^{k} - \mb{z}^*\|_2^2 + \alpha^2 \|\overline{\mb{p}}^{k} - \overline{\mb{h}}^{k}\|_2^2 + \alpha^2 \mbb{E}[\|\ol{\mb{g}}^{k} - \ol{\mb{h}}^{k}\|_2^2 | \mc{F}^{k}] \nonumber\\
&~~~+ (1-\alpha \mu) \left( \|\overline{\mb{x}}^{k} - \mb{z}^*\|^2 +\alpha^2\|\overline{\mb{p}}^{k} - \overline{\mb{h}}^{k}\|^2\right) \nonumber\\
&\leq 2 \|\overline{\mb{x}}^{k} - \mb{z}^*\|_2^2 + \frac{2 \alpha^2 L^2}{n} \|(\mb{1}_n \otimes \ol{\mb{x}}^{k}) - \mb{z}^{k}\|_2^2 + \alpha^2 \mbb{E}[\|\ol{\mb{g}}^{k} - \ol{\mb{h}}^{k}\|_2^2 | \mc{F}^{k}] \nonumber\\
&\leq 2 \|\overline{\mb{x}}^{k} - \mb{z}^*\|_2^2 + \frac{2 \alpha^2 L^2}{n} \|(\mb{1}_n \otimes \ol{\mb{x}}^{k}) - \mb{z}^{k}\|_2^2 \nonumber\\
&~~~+ \alpha^2 \left(\frac{4 L^2}{n^2} \|\mb{z}^{k} - (\mb{1}_n \otimes \ol{\mb{x}}^{k}) \|_2^2 + \frac{4 L^2}{n} \|\ol{\mb{x}}^{k} - \mb{z}^* \|_2^2  + \frac{2 L^2}{n^2} \mb{t}^{k} \right) \nonumber\\
&\leq \left(2 + \frac{4 \alpha^2 L^2}{n}\right) \|\overline{\mb{x}}^{k} - \mb{z}^*\|_2^2 + \frac{2 \alpha^2 L^2}{n^2} \mb{t}^{k} \nonumber\\\nonumber
&~~~+ \left( \frac{6 \alpha^2 L^2}{n} \right) \left(d_1 \ol{\pi} \|\mb{x}^{k} - B^\infty \mb{x}^{k}\|_{\bpi}^2 + d_2 \lambda^k \|\mb{x}^{k}\|_2^2 \right).
\end{align}
Multiplying both sides by~$n$~and using~${0<\alpha \leq \frac{\sqrt{n}}{2 L}}$~for the first term leads to
\begin{align}\nonumber\\
\mbb{E}[n \|\overline{\mb{x}}^{k+1} - \mb{z}^* \|_2 ^2 | \mc{F}^{k}] &\leq 3 n \|\overline{\mb{x}}^{k} - \mb{z}^*\|_2^2 + 6 \alpha^2 L^2 d_1 \ol{\pi} \|\mb{x}^{k} - B^\infty \mb{x}^{k}\|_{\bpi}^2 \nonumber\\
&~~~+ 6 \alpha^2 L^2 d_2 \lambda^k \|\mb{x}^{k}\|_2^2 + \frac{2 \alpha^2 L^2}{n} \mb{t}^{k}.
\end{align}

\paragraph{Step 4 -- Gradient tracking error} Finally, we find a bound on the gradient tracking error to complete the LTI system in~\eqref{sys_eq}. We have
\begin{align} \label{gt1}
   \mbb{E}\left[\|\mb{w}^{k+1} - B^\infty \mb{w}^{k+1} \|_{\bpi} ^2 | \mc{F}^k \right]
   &= \mbb{E}\left[\|B \mb{w}^{k} - B^\infty \mb{w}^{k} + (I_{n p} - B^\infty) (\mb{g}^{k+1} - \mb{g}^{k})\|_{\bpi}^2 | \mc{F}^k \right] \nonumber\\
   &\leq \left(1+\frac{1-\lambda^2}{2\lambda^2} \right) \lambda^2 \mbb{E}\left[\|\mb{w}^{k} - B^\infty \mb{w}^{k} \|_{\bpi}^2 | \mc{F}^k \right] \nonumber \\
   &~~~+ \left(1+\frac{2\lambda^2}{1-\lambda^2} \right) \mbb{E}\left[\|\mb{g}^{k+1} - \mb{g}^{k}\|_{\bpi}^2 | \mc{F}^k \right] \nonumber\\
   &\leq \frac{1 + \lambda^2}{2}  \mbb{E}\left[\|\mb{w}^{k} - B^\infty \mb{w}^{k} \|_{\bpi}^2 | \mc{F}^k \right] + \frac{2}{1 - \lambda^2}  \mbb{E}\left[\|\mb{g}^{k+1} - \mb{g}^{k}\|_{\bpi}^2 | \mc{F}^k \right]. 
\end{align}
where we use the Young's inequality in the second step. Next we expand the second term above.
\begin{align}\label{gg_eq}
   \mbb{E}\left[\|\mb{g}^{k+1} - \mb{g}^{k}\|_{\bpi}^2 | \mc{F}^k \right] &= \mbb{E}\left[\|\mb{g}^{k+1} - \mb{g}^{k} - \nabla \mb{f}(\mb{z}^{k+1}) + \nabla \mb{f}(\mb{z}^{k}) + \nabla \mb{f}(\mb{z}^{k+1}) - \nabla \mb{f}(\mb{z}^{k})\|_{\bpi}^2 | \mc{F}^k \right] \nonumber \\
   &\leq 2 \mbb E \left[\|\nabla \mb{f}(\mb{z}^{k+1}) - \nabla \mb{f}(\mb{z}^{k})\|_{\bpi}^2 | \mc{F}^k \right] \nonumber\\
   &~~~+ 2 \mbb E \left[\| \mb{g}^{k+1} - \mb{g}^{k} - \nabla \mb{f}(\mb{z}^{k+1}) + \nabla \mb{f}(\mb{z}^{k})\|_{\bpi}^2 | \mc{F}^k \right] \nonumber \\
   &\leq 2 L^2 \ul{\pi}^{-1} \mbb E \left[\|\mb{z}^{k+1} - \mb{z}^{k}\|_{2}^2 | \mc{F}^k \right] + 4 \ul{\pi}^{-1} \mbb{E}[\| \mb{g}^{k} - \nabla \mb{f}(\mb{z}^{k}) \|_{2}^2| \mc{F}^{k}] \nonumber\\
   &~~~+ 4 \ul{\pi}^{-1} \mbb{E}\left[\mbb{E}[\| \mb{g}^{k+1} - \nabla \mb{f}(\mb{z}^{k+1})\|_{2}^2| \mc{F}^{k+1}] \big| \mc{F}^{k} \right]
\end{align}
For~$0<\alpha \leq \frac{1}{4L\sqrt{6}}$, we first simplify~$\mbb{E}\left[\mbb{E}[\| \mb{g}^{k+1} - \nabla f(\mb{z}^{k+1})\|_{2}^2| \mc{F}^{k+1}] \big| \mc{F}^{k} \right]$ with the help of~\eqref{gk_fzk}:
\begin{align} \label{gfk1}
    &\mbb{E}\left[\mbb{E}[\| \mb{g}^{k+1} - \nabla f(\mb{z}^{k+1})\|_{2}^2| \mc{F}^{k+1}] \big| \mc{F}^{k} \right] \leq 4 L^2 \|\mb{z}^{k+1} - (\mb{1}_n \otimes \ol{\mb{x}}^{k+1}) \|_2^2 \nonumber\\
    &+ 4 n L^2 \|\ol{\mb{x}}^{k+1} - \mb{z}^* \|_2^2  + 2 L^2 \mb{t}^{k+1} \nonumber\\
    \leq &~4 L^2 \bigg ( d_1 \ol{\pi} \|\mb{x}^{k+1} - B^\infty \mb{x}^{k+1}\|_{\bpi}^2 + d_2 \lambda^k \|\mb{x}^{k+1}\|_2^2 \bigg) + 4 n L^2 \|\ol{\mb{x}}^{k+1} - \mb{z}^* \|_2^2  + 2 L^2 \mb{t}^{k+1} \nonumber\\
    \leq &~4 L^2 d_1 \ol{\pi} \left( 2 \|\mb{x}^{k} - B^\infty \mb{x}^{k}\|^2 _{\bpi} + 2 \alpha^2 \| \mb{w}^{k} - B^\infty \mb{w}^{k} \|^2 _{\bpi} \right) + 4 L^2 \bigg ( 6 L^2 \alpha^2 d_1 \ol{\pi} \|\mb{x}^{k} - B^\infty \mb{x}^{k} \|^2 _{\bpi} \nonumber\\
    &+ 3 n \|\overline{\mb{x}}^{k} - \mb{z}^*\|_2^2 + 6 L^2 \alpha^2 d_2 \lambda^k \|\mb{x}^{k} \|^2 _2  + \frac{2 \alpha^2 L^2}{n} \mb{t}^{k} \bigg ) + 4 L^2 d_2 \lambda^k \left( \|B \mb{x}^{k} - \alpha \mb{w}^k \|^2 _2 \right) \nonumber\\
    &+ 2 L^2 \bigg( \left( 1 - \frac{1}{M} \right) \mb{t}^{k} + \frac{2}{m} d_1 \ol{\pi} \|\mb{x}^{k} - B^\infty \mb{x}^{k}\|_{\bpi}^2 + \frac{2 n}{m} \left\|\ol{\mb{x}}^{k} - \mb{z}^* \right\|_2^2 + \frac{2}{m} d_2 \lambda^k \|\mb{x}^{k}\|_2^2 \bigg) \nonumber\\
    \leq & ~12.25 L^2 d_1 \ol{\pi} \|\mb{x}^{k} - B^\infty \mb{x}^{k}\|_{\bpi}^2 +  16 L^2 n \|\overline{\mb{x}}^{k} - \mb{z}^*\|_2^2 + 8 L^2 \alpha^2 d_1 \ol{\pi} \| \mb{w}^{k} - B^\infty \mb{w}^{k} \|^2 _{\bpi} + 2.25 L^2 \mb{t}^k \nonumber\\
    &+ 12.25 L^2 d_2 \lambda^k \|\mb{x}^{k} \|^2 _2 + 8 L^2 \alpha^2 d_2 \lambda^k \|\mb{w}^k \|^2 _2.
\end{align}
Similarly, we simplify~$\mbb{E}\left[\| \mb{g}^{k} - \nabla f(\mb{z}^{k})\|_{2}^2 \big| \mc{F}^{k} \right]$:
\begin{align} \label{gfk}
\mbb{E}\left[\| \mb{g}^{k} - \nabla f(\mb{z}^{k})\|_{2}^2 \big| \mc{F}^{k} \right] &\leq 4 L^2 \|\mb{z}^{k} - (\mb{1}_n \otimes \ol{\mb{x}}^{k}) \|_2^2 + 4 n L^2 \|\ol{\mb{x}}^{k} - \mb{z}^* \|_2^2  + 2 L^2 \mb{t}^{k} \nonumber\\
&\leq 4 L^2 d_1 \ol{\pi} \|\mb{x}^{k} - B^\infty \mb{x}^{k}\|_{\bpi}^2 + 4 L^2 d_2 \lambda^k \|\mb{x}^{k}\|_2^2 \nonumber\\
&~~~+ 4 n L^2 \|\ol{\mb{x}}^{k} - \mb{z}^* \|_2^2  + 2 L^2 \mb{t}^{k}.
\end{align}

The final term to be expanded in~\eqref{gg_eq}~can be written as
\begin{align*}
    \|\mb{z}^{k+1} - \mb{z}^{k}\|_{2}^2
    &= \|Y_{k+1}^{-1}((B \mb{x}^{k} - \alpha \mb{w}^{k}) - \mb{x}^{k}) + (Y_{k+1}^{-1} - Y_{k}^{-1}) \mb{x}^{k} \|_{2}^2 \nonumber\\
    &= \|Y_{k+1}^{-1}(B - I_{n})\mb{x}^{k} - \alpha Y_{k+1}^{-1} \mb{w}^{k} + (Y_{k+1}^{-1} - Y_{k}^{-1}) \mb{x}^{k} \|_{2}^2 \nonumber\\
    &\leq \|Y_{k+1}^{-1}(B - I_{n})\mb{x}^{k}\|_{2}^2 +  \|\alpha Y_{k+1}^{-1} \mb{w}^{k}\|_{2}^2 + \mn{Y_{k+1}^{-1} - Y_{k}^{-1}}_{2}^2 \|\mb{x}^{k} \|_{2}^2 \nonumber\\ 
    &~~~+ 2 \|Y_{k+1}^{-1}(B - I_{n})\mb{x}^{k}\|_{2}\| \alpha Y_{k+1}^{-1} \mb{w}^{k}\|_{2} + 2 \|\alpha Y_{k+1}^{-1} \mb{w}^{k}\|_{2}\mn{Y_{k+1}^{-1} - Y_{k}^{-1}}_{2} \|\mb{x}^{k} \|_{2} \nonumber\\
    &~~~+ 2\|Y_{k+1}^{-1}(B - I_{n})\mb{x}^{k}\|_{2}\mn{Y_{k+1}^{-1} - Y_{k}^{-1}}_{2} \|\mb{x}^{k} \|_{2} \nonumber \\
    &\leq 12 y_- ^2 \ol{\pi} \| \mb{x}^{k} - B^\infty \mb{x}^{k} \|^2_{\bpi} + 3 \alpha^2 y_-^2 \| \mb{w}^{k} \|_{2}^2 + 12 y_-^4 T^2 \lambda^{2k} \|\mb{x}^{k}\|_2^2, 
    \end{align*}
    which leads to
    \begin{align} \label{zzk1}
    \mbb{E}\left[\|\mb{z}^{k+1} - \mb{z}^{k}\|_{2}^2 \big| \mc{F}^{k} \right] &\leq 12 y_- ^2 \ol{\pi} \| \mb{x}^{k} - B^\infty \mb{x}^{k} \|^2_{\bpi} + 3 \alpha^2 y_-^2 \mbb{E}\left[\| \mb{w}^{k} \|_{2}^2\big| \mc{F}^{k} \right] + 12 y_-^4 T^2 \lambda^{2k} \|\mb{x}^{k}\|_2^2.
\end{align}
This leaves us with the following bound left to be established:
\begin{align}\label{wk}
    \mbb{E}\left[\|\mb{w}^{k}\|_{2}^2\big| \mc{F}^{k} \right] &=\mbb{E}\left[\|\mb{w}^{k} - Y^\infty (\mb{1}_n \otimes \overline{\mb{g}}^{k}) + Y^\infty (\mb{1}_n \otimes \overline{\mb{g}}^{k}) -Y^\infty (\mb{1}_n \otimes \overline{\mb{p}}^{k}) + Y^\infty (\mb{1}_n \otimes \overline{\mb{p}}^{k}) \|_{2}^2 | \mc{F}^{k} \right] \nonumber\\ 
    &\leq 3 \mbb{E}\left[\|\mb{w}^{k} - B^\infty \mb{w}^{k} \|_{2}^2 \big| \mc{F}^{k} \right] + 3 y_-^2 y^2 \|\mb{1}_n  \otimes \overline{\mb{p}}^{k}\|_{2}^2 + 3 y_-^2 y^2 n \mbb{E}\left[ \|\overline{\mb{g}}^{k} - \overline{\mb{p}}^{k} \|_{2}^2\big| \mc{F}^{k} \right] \nonumber\\
    &\leq 3 \ol{\pi} \mbb{E}\left[\|\mb{w}^{k} - B^\infty \mb{w}^{k} \|_{\bpi}^2 \big| \mc{F}^{k} \right] + 3 y_-^2 y^2 L^2  \|\ol{\mb{x}}^{k} - \mb{z}^*\|_{2}^2 \nonumber\\
    &~~~+6 y_-^2 y^2 n \mbb{E}\left[\|\ol{\mb{g}}^{k} - \ol{\mb{h}}^{k} \|_2^2\big| \mc{F}^{k} \right] + 6 y_-^2 y^2 L^2 \|\mb{z}^{k} - (\mb{1}_n \otimes \ol{\mb{x}}^{k}) \|_2^2 \nonumber\\
    &\leq 3 \ol{\pi} \mbb{E}\left[\|\mb{w}^{k} - B^\infty \mb{w}^{k} \|_{\bpi}^2 \big| \mc{F}^{k} \right] + 3 y_-^2 y^2 L^2  \|\ol{\mb{x}}^{k} - \mb{z}^*\|_{2}^2 \nonumber\\
    &~~~+ 6 y_-^2 y^2 n \left( \frac{4 L^2}{n^2} \|\mb{z}^{k} - (\mb{1}_n \otimes \ol{\mb{x}}^{k}) \|_2^2 + \frac{4 L^2}{n} \|\ol{\mb{x}}^{k} - \mb{z}^* \|_2^2  + \frac{2 L^2}{n^2} \mb{t}^{k} \right) \nonumber\\
    &~~~+ 6 y_-^2 y^2 L^2 \|\mb{z}^{k} - (\mb{1}_n \otimes \ol{\mb{x}}^{k}) \|_2^2 \nonumber\\
    &\leq 3 \ol{\pi} \mbb{E}\left[\|\mb{w}^{k} - B^\infty \mb{w}^{k} \|_{\bpi}^2 \big| \mc{F}^{k} \right] + 27 y_-^2 y^2 L^2  \|\ol{\mb{x}}^{k} - \mb{z}^*\|_{2}^2 + \frac{12 y_-^2 y^2 L^2}{n} \mb{t}_k \nonumber\\
    &~~~+ \left(6 y_-^2 y^2 L^2 + \frac{24 y_-^2 y^2 L^2}{n} \right) \left( d_1 \ol{\pi} \|\mb{x}^{k} - B^\infty \mb{x}^{k}\|_{\bpi}^2 + d_2 \lambda^k \|\mb{x}^{k}\|_2^2 \right) \nonumber\\
    &\leq 3 \ol{\pi} \mbb{E}\left[\|\mb{w}^{k} - B^\infty \mb{w}^{k} \|_{\bpi}^2 \big| \mc{F}^{k} \right] + 27 y_-^2 y^2 L^2  \|\ol{\mb{x}}^{k} - \mb{z}^*\|_{2}^2 + \frac{12 y_-^2 y^2 L^2}{n} \mb{t}_k \nonumber\\
    &~~~+ 30 y_-^2 y^2 L^2 d_1 \ol{\pi} \|\mb{x}^{k} - B^\infty \mb{x}^{k}\|_{\bpi}^2 + 30 y_-^2 y^2 L^2 d_2 \lambda^k \|\mb{x}^{k}\|_2^2 
\end{align}
We use the above expression of~$\|\mb{w}^{k}\|_{2}^2$~in~\eqref{gfk1}~and~\eqref{zzk1}~followed by plugging the evaluated expressions of~\eqref{gfk1},~\eqref{gfk}~and~\eqref{zzk1}~in~\eqref{gg_eq}. The final expression of~\eqref{gt1} is
\begin{align*}
    &\mbb{E}\left[\|\mb{w}^{k+1} -B^\infty \mb{w}^{k+1} \|^2 _{\bpi} |\mc{F}^{k} \right] \\
    \leq &\left( \frac{48 y_-^2 + 130 d_1}{1 - \lambda^2} + \frac{120 \alpha^2 L^2 y_-^2 y^2 d_1 \left( 3 y_-^2 + 16 d_2 \right)}{1 - \lambda^2} \right) L^2 h \| \mb{x}^{k} - B^\infty \mb{x}^{k} \|^2_{\bpi} \\
    +& \left(\frac{160 }{1 - \lambda^2} + \frac{108 \alpha^2 L^2 y_-^2 y^2 \left( 3 y_-^2 + 16 d_2 \right)}{n(1 - \lambda^2)} \right) L^2 \ul{\pi}^{-1} n \|\overline{\mb{x}}^{k} - \mb{z}^*\|_2^2 \\
    +& \left(\frac{34}{1 - \lambda^2} + \frac{48 \alpha^2 L^2 y_-^2 y^2 \left( 3 y_-^2 + 16 d_2 \right)}{n(1 - \lambda^2)} \right) L^2 \ul{\pi}^{-1} \mb{t}^k  \\
    +& \left(\frac{48 y_-^4 T^2 \lambda^{2k} + 130 d_2 \lambda^k}{1 - \lambda^2} + \frac{120 \alpha^2 L^2 y_-^2 y^2 d_2 \lambda^k \left( 3 y_-^2 + 16 d_2 \lambda^k \right)}{1 - \lambda^2} \right)  L^2 \ul{\pi}^{-1} \|\mb{x}^{k}\|_2^2
    \\
    +& \left( \frac{1 + \lambda^2}{2} + \frac{4 \alpha^2 L^2 {h}_c (16 d_1 + 9 y_-^2 + 48 d_2)}{1 - \lambda^2}  \right) \mbb{E}\left[\|\mb{w}^{k} - B^\infty \mb{w}^{k} \|_{\bpi}^2|\mc{F}^{k} \right]
\end{align*}

Let~${0<\alpha^2 \leq \frac{1}{12 L^2 y_-^2 y^2 (3 y_-^2 + 16 d_2)}}$~and~${0<\alpha^2 \leq \left(\frac{1-\lambda^2}{2}\right)^2 \left(\frac{1}{4 L^2 h (16 d_1 + 9 y_-^2 + 48 d_2)}\right)}$ for the first four terms and for the last term, respectively. Furthermore, we note that \begin{align*}
y_- h &\leq d_1 h \leq \psi\qquad\mbox{and}\qquad
48 y_-^4 T^2 \ul{\pi}^{-1} + 140 d_2 \ul{\pi}^{-1} \leq 188 \psi^2 T.
\end{align*}
Taking full expectation, we get the final expression for the gradient tracking error as
\begin{align*}
    \mbb{E}\left[\|\mb{w}^{k+1} -B^\infty \mb{w}^{k+1} \|^2 _{\bpi} \right] &\leq \left( \frac{188 L^2 \psi }{1 - \lambda^2} \right) \mbb{E} [\| \mb{x}^{k} - B^\infty \mb{x}^{k} \|^2_{\bpi}] + \left(\frac{169 L^2 \ul{\pi}^{-1} }{1 - \lambda^2} \right)  n \mbb{E} [\|\overline{\mb{x}}^{k} - \mb{z}^*\|_2^2] \\
    &~~~+ \left(\frac{38 L^2 \ul{\pi}^{-1}}{1 - \lambda^2} \right)  \mbb{E} [\mb{t}^k] + \left( \frac{3 + \lambda^2}{4} \right) \mbb{E} [\|\mb{w}^{k} - B^\infty \mb{w}^{k} \|_{\bpi}^2] \\
    &~~~+ \left(\frac{188 L^2 \psi^2}{1 - \lambda^2} \right) T\lambda^k \mbb{E} [\|\mb{x}^{k}\|_2^2].
\end{align*}
Next we chose a step-size that is the smallest among all above~$\alpha$~bounds and thus have~${0<\alpha \leq \frac{1-\lambda^2}{28 L \kappa \psi}}$ to complete the LTI system and the lemma follows.
\end{proof}

\section{Proof of Lemma 3: Stability of~$G_\alpha$}
\begin{proof}We recall from~\cite{hornjohnson} that for a non-negative matrix~$G_\alpha$, if there exist a positive vector~$\bds \delta$ and a positive constant~$\gamma$ such that~$G_\alpha\bds\delta\leq\gamma\bds\delta$, then~$\rho(G_\alpha) \leq \mn{G_\alpha }_\infty^{\bds\delta} \leq \gamma.$ Setting~${\gamma = \left( 1 - \frac{\alpha \mu}{4} \right)}$, we thus solve for the range of~$\alpha$ and a positive vector~${\bds{\delta} = [\delta_1, \delta_2, \delta_3, \delta_4]^\top}$ such that the inequalities~$G_\alpha \bds{\delta} \leq (1 - \frac{\alpha \mu}{4} ) \bds{\delta}$ hold elementwise. With~$G_\alpha$ from Lemma~\ref{main_lem}, we obtain
\begin{align} \label{e1}
    &\frac{\alpha \mu}{4} + \frac{2 \alpha^2 L^2}{1-\lambda^2} \frac{\delta_4}{\delta_1} \leq \frac{1-\lambda^2}{2},\\ \label{e2}
    &\frac{2 \alpha L^2}{n} \delta_3 \leq \frac{\mu}{4} \delta_2 - \frac{2 L^2 \psi \ol{\pi}}{\mu} \delta_1,\\ \label{e3}
    &\frac{\alpha \mu}{4} \leq \frac{1}{M} - \frac{2 \psi \ol{\pi}}{m} \frac{\delta_1}{\delta_3} - \frac{2}{m} \frac{\delta_2}{\delta_3},\\ \label{e4}
    &\frac{\alpha \mu}{4} \leq \frac{1-\lambda^2}{4} - \frac{188 \psi}{1-\lambda^2} \frac{\delta_1}{\delta_4} - \frac{169 \ul{\pi}^{-1}}{1-\lambda^2} \frac{\delta_2}{\delta_4} - \frac{38 \ul{\pi}^{-1}}{1-\lambda^2} \frac{\delta_3}{\delta_4}.
\end{align}
We note that~\eqref{e2},~\eqref{e3}, and~\eqref{e4} are true for some feasible range of~$\alpha$ when their right hand sides are positive. We fix the elements of~$\bds{\delta}$, independent of~$\alpha$, such that~${\delta_1 = 1}, {\delta_2 = 8.5 \kappa^2 \psi \ol{\pi}}, {\delta_3 = \frac{20 M \kappa^2 \psi \ol{\pi}}{m}}$, and~${\delta_4 = \frac{19076M\kappa^2 \psi h}{m(1-\lambda^2)^2}}$. It can be verified for these~$\delta_1, \delta_2, \delta_3$, and~$\delta_4$, the right hand side of~\eqref{e2},~\eqref{e3} and~\eqref{e4} are positive. We next solve to find the range of~$\alpha$. From~\eqref{e2}, we have
\begin{align*}
    \alpha &\leq \frac{n}{2 L^2 \delta_3} \left(\frac{\mu}{4} \delta_2 - \frac{2 L^2 \psi \ol{\pi}}{\mu} \delta_1 \right) = \frac{m}{M}\frac{n}{320 L \kappa}.
\end{align*}
For~\eqref{e3}, the bound on~$\alpha$ follows:
\begin{align*}
    \alpha &\leq \frac{4}{\mu} \left(\frac{1}{M} - \frac{2 \psi \ol{\pi}}{m} \frac{\delta_1}{\delta_3} - \frac{2}{m} \frac{\delta_2}{\delta_3} \right) = \frac{4}{M \mu} \left(1 - \frac{1}{10 \kappa^2} - \frac{8.5}{10} \right).
\end{align*}
It is straightforward to verify that~${\alpha \leq \frac{1}{5M\mu}}$~satisfies the above relation. Solving for~\eqref{e4} leads to
\begin{align*}
    \alpha &\leq \frac{4}{\mu} \left(\frac{1-\lambda^2}{4} - \frac{188 \psi}{1-\lambda^2} \frac{\delta_1}{\delta_4} - \frac{169 \ul{\pi}^{-1}}{1-\lambda^2} \frac{\delta_2}{\delta_4} - \frac{38 \ul{\pi}^{-1}}{1-\lambda^2} \frac{\delta_3}{\delta_4} \right) \\
    &= \frac{4(1-\lambda^2)}{\mu} \left(\frac{1}{4} - \frac{188 m}{19076 M \kappa^2 h} - \frac{1436.5 m}{19076 M } - \frac{760}{19076} \right),
\end{align*}
which is true for~$\alpha \leq \frac{1-\lambda^2}{2 \mu}$.
Next, we note that~\eqref{e1} has a solution if we choose~$\alpha \leq \frac{1-\lambda^2}{\mu}$ for the first term and~$\alpha \leq \frac{m}{M} \frac{(1-\lambda^2)^2}{400 \kappa L \psi}$ for the second term. Finally, we use the minimum across all the bounds on~$\alpha$ and the lemma follows by putting~$\alpha$ in~$\gamma=1-\frac{\alpha\mu}{4}$ and noting that~$(1-\lambda) < (1-\lambda^2)$.
\end{proof}
\end{document}